\DeclareRobustCommand\onedot{\futurelet\@let@token\@onedot}
\def\@onedot{\ifx\@let@token.\else.\null\fi\xspace}
\def\eg{\emph{e.g}\onedot} 
\def\ie{\emph{i.e}\onedot} 
\def\cf{\emph{c.f}\onedot} 
\def\etc{\emph{etc}\onedot} \def\vs{\emph{vs}\onedot}
\def\etal{\emph{et al}\onedot}
\newcolumntype{C}[1]{>{\centering\arraybackslash}p{#1}}
\newtheorem{remark}{Remark}
\newtheorem{theorem}{Proposition}
\newcommand{\grayer}[1]{\textcolor{lgray}{#1}}
\def \pzo {\phantom{0}} 
\def \alambic {\includegraphics[width=0.1\linewidth]{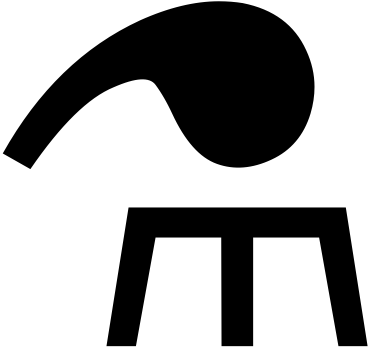}\xspace}
\def \alambicsmall {\includegraphics[width=0.017\linewidth]{alembic-crop.pdf}\xspace}
\definecolor{lightgray}{rgb}{0.9, 0.9, 0.9}
\definecolor{lgray}{rgb}{0.66, 0.66, 0.66}
\newcommand{\myparagraph}[1]{\medbreak\noindent\textbf{#1}}
\title{Analogous to Evolutionary Algorithm: \\Designing a Unified Sequence Model}
\author{%
  Jiangning Zhang$^1$\thanks{Work done during an intership at Tencent Youtu Lab.}
  ~~ Chao Xu$^1$
  ~~ Jian Li$^2$
  ~~ Wenzhou Chen$^1$
  ~~ Yabiao Wang$^2$\\
  ~~ \textbf{Ying Tai}$^2$
  ~~ \textbf{Shuo Chen}$^3$
  ~~ \textbf{Chengjie Wang}$^2$
  ~~ \textbf{Feiyue Huang}$^2$
  ~~ \textbf{Yong Liu}$^1$\thanks{Corresponding author.}~~~~~\\

  \normalsize $^1$APRIL Lab, Zhejiang University ~~ $^2$Youtu Lab, Tencent~ \\
  ~ $^3$RIKEN Center for Advanced Intelligence Project~~~ \\
  {\tt\small \{186368, 21832066, wenzhouchen\}@zju.edu.cn, yongliu@iipc.zju.edu.cn} \\
  {\tt\small \{swordli, caseywang, yingtai, jasoncjwang, feiyuehuang\}@tencent.com} \\
  {\tt\small shuo.chen.ya@riken.jp} \\

}
\begin{document}

\maketitle

\begin{abstract}
  Inspired by biological evolution, we explain the rationality of Vision Transformer by analogy with the proven practical Evolutionary Algorithm (EA) and derive that both of them have consistent mathematical representation. Analogous to the dynamic local population in EA, we improve the existing transformer structure and propose a more efficient EAT model, and design task-related heads to deal with different tasks more flexibly. Moreover, we introduce the spatial-filling curve into the current vision transformer to sequence image data into a uniform sequential format. Thus we can design a unified EAT framework to address multi-modal tasks, separating the network architecture from the data format adaptation. Our approach achieves state-of-the-art results on the ImageNet classification task compared with recent vision transformer works while having smaller parameters and greater throughput. We further conduct multi-modal tasks to demonstrate the superiority of the unified EAT, \eg, Text-Based Image Retrieval, and our approach improves the rank-1 by +3.7 points over the baseline on the CSS dataset.\footnote{Code is available at \url{https://github.com/TencentYoutuResearch/BaseArchitecture-EAT}.}
\end{abstract}

\section{Introduction}
Since Vaswani~\etal~\cite{attn} introduce the Transformer that achieves outstanding success in the machine translation task, many improvements have been made to this method~\citep{attn_improve3,attn_improve4,attn_bert}. Recent works~\citep{attn_deit,attn_deepvit,attn_swintransformer} led by ViT~\citep{attn_vit} have achieved great success in the field of many vision tasks by replacing CNN with transformer structure. In general, these works are experimentally conducted to verify the effectiveness of modules or improvements, but they may lack other forms of supporting evidence. 

\begin{figure*}[htp]
  \centering
  \includegraphics[width=0.9\linewidth]{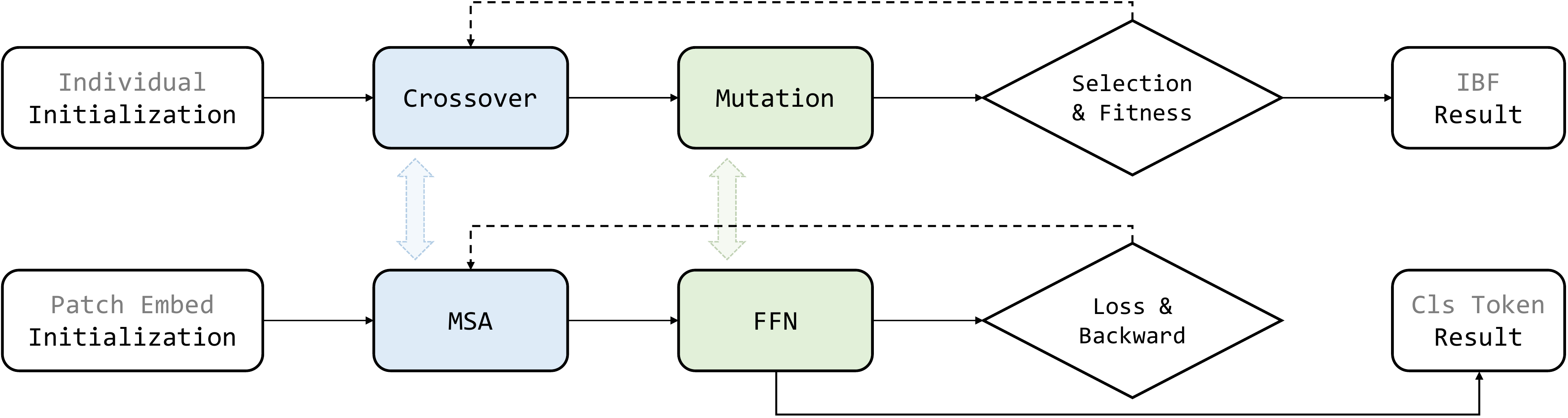}
  \caption{Analogy of EA (top) and Transformer (bottom) pipelines. For simplicity, only one layer of the Transformer structure is displayed here.}
  \label{fig:ea_tr}
  \vspace{-1.0em}
\end{figure*}
Inspired by biological population evolution, we explain the rationality of Vision Transformer by analogy with the proven effective, stable, and robust Evolutionary Algorithm (EA), which has been widely used in practical applications. 
Through analogical analysis, we observe that the training procedure of the transformer has similar attributes to the naive EA, as shown in Figure~\ref{fig:ea_tr}. Take the one-tier transformer (\textit{abbr.}, TR) as an example. \textbf{\textit{1)}} TR processes a sequence of patch embeddings while EA evolutes individuals, both of which have the same vector formats and necessary initialization. \textbf{\textit{2)}} The Multi-head Self-Attention (MSA) among patch embeddings in TR is compared with that of (sparse) global individual crossover among all individuals in EA, in which local and dynamic population concepts are introduced to increase running speed and optimize results~\citep{dynamic1,dynamic2}. \textbf{\textit{3)}} Feed-Forward Network (FFN) in TR enhances embedding features that is similar to the individual mutation in EA. \textbf{\textit{4)}} During training, TR optimizes the network through backpropagation while EA optimizes individuals through selection and fitness calculation. \textbf{\textit{5)}} TR chooses the enhanced Classification Token (Cls Token) as the target output, while EA chooses Individual with the Best Fitness (IBF). Meanwhile, we deduce the mathematical characterization of crossover and mutation operators in EA (\cf, Equations~\ref{eq:crossover},\ref{eq:mutation}) and find that they have the same mathematical representation as MSA and FFN in TR (\cf, Equations~\ref{eq:sa},\ref{eq:ffn}), respectively. 
Inspired by the characteristics in the crossover step of EA, we propose a novel \emph{EA-based Transformer} (EAT) that intuitively designs a local operator in parallel with global MSA operation, in which the local operator can be instantiated as 1D convolution, local MSA, \etc. Subsequent experiments demonstrate the effectiveness of this design in that it could reduce parameters and improve the running speed and boost the network performance, which is consistent with the results of EAs in turn. 
Current TR-based models would initialize different tokens for different tasks, and they participate in every level of calculation that is somewhat incompatible with other tokens for internal operations. Therefore, we design task-related heads docked with transformer backbone to complete final information fusion, which is more flexible for different tasks learning and suitable for the transfer learning of downstream tasks.

\begin{figure*}[t]
  \centering
  \includegraphics[width=0.9\linewidth]{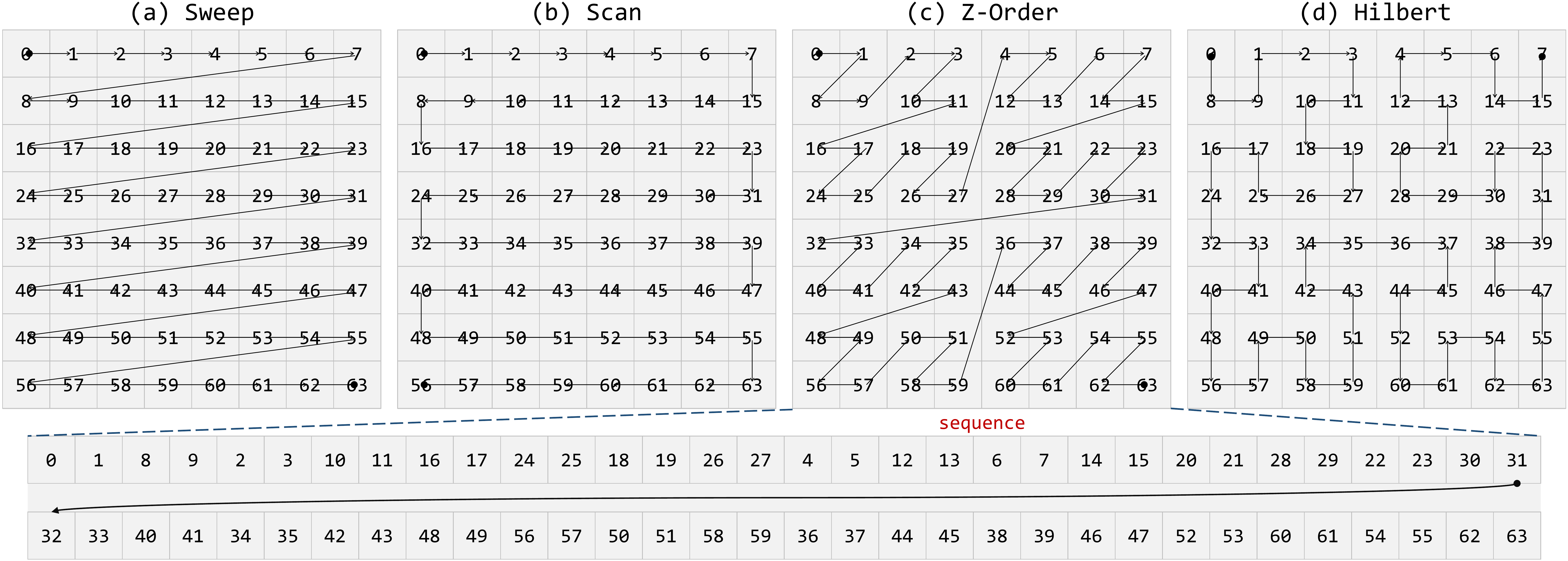}
  \caption{Different SFC indexing methods, taking 2D images with side length of 8 as an example.}
  \label{fig:curve}
  \vspace{-1.5em}
\end{figure*}
Our designed local operator receives the same data format as the global MSA branch, \ie, a sequence that is conform to NLP. Therefore, the generally used high dimension operations such as 2D reshape~\citep{attn_ceit,attn_cvt,attn_t2tvit} are not required, which brings the possibility to standardize multi-modal data (\eg, 1D sentence, 2D image, and 3D video) of input into consistent sequence data in \textit{one} unified model. 
To accomplish this target, we introduce the space-filling curve (SFC) concept to standardize the multi-modal data format and design an SFC module. As shown in Figure~\ref{fig:curve}, taking 2D image as an example, the top half represents four kinds of SFCs: Sweep, Scan, Z-Order, and Hilbert, while the bottom two lines represent the sequenced image by Z-Order SFC. Thus the image ($\in\mathbb{R}^{8 \times8 \times 3}$) is specifically re-indexed and arranged ($\in\mathbb{R}^{64 \times 3}$) by the predefined SFC before feeding the network, realizing uniform sequence input. The difference between SFCs mainly reflects how the 1D sequence preserves the 2D spatial structure, and the SFC can be extended to 3D and higher dimensions. 

Specifically, we make the following four contributions:
\begin{itemize}
  \item In theory, we explain the rationality of Vision Transformer (TR) by analogy with Evolutionary Algorithm (EA) and derive that they have consistent mathematical representation.
  \item For the method, we improve a unified EAT model by analogy with dynamic local population concept in EA and design a Task-related Head to deal with various tasks more flexibly.
  \item On framework, we introduce Space-Filling Curve (SFC) module as a bridge between rasterized 2D image data and serialized 1D sequence data. This makes it possible to integrate the unified paradigm that uses a unified model to solve multi-modal tasks, keeping the network architecture and data structure independent of the data dimensionality. Note that only several 1D operators are required for our method, meaning that it is very friendly to the underlying optimization workload of different platforms.
  \item Massive experiments on classification and multi-modal tasks demonstrate the superiority and flexibility of our approach.
\end{itemize}

\vspace{-0.7em}
\section{Related Work}
\vspace{-0.7em}
\subsection{Evolution Algorithms}
\vspace{-0.3em}
Evolution algorithm is a subset of evolutionary computation in computational intelligence, and it belongs to modern heuristics~\citep{ea1,ea2}. Inspired by biological evolution, general EA contains reproduction, crossover, mutation, and selection steps, and has been proven to be effective and stable in many application scenarios~\citep{mea2}. 
Moscato~\etal~\cite{mea1} propose the Memetic Algorithm (MA) for the first time in 1989, which applies a local search process to refine solutions. 
Later Differential Evolution (DE) appears in 1995~\cite{dea1} is arguably one of the most competitive improved variant~\citep{dea2,dea3,dea4,dea5}. The core of DE is a differential mutation operator, which differentiates and scales two individuals in the same population and interacts with the third individual to generate a new individual. 
In contrast to the aforementioned global optimization, Local Search Procedures (LSPs) aim to find a solution that is as good as or better than all other solutions in its "neighborhood"~\citep{lpea1,lpea2,lpea3,lpea4,lpea5}. LSP is more efficient than global search in that a solution can be verified as a local optimum quickly without associating the global search space.
Also, some works~\citep{lpea1,lpea2,lpea4} apply the Multi-Population Evolutionary Algorithm (MPEA) to solve the constrained function optimization problems relatively efficiently. 
Recently, Li~\etal~\cite{lpea4} point out that the ecological and evolutionary history of each population is unique, and the capacity of Xylella fastidiosa varies among subspecies and potentially among populations. Therefore, we argue that the local population as a complement to the global population can speed up the EA convergence and obtain better results. In this paper, we explain and improve the naive transformer structure by analogy with the validated evolutionary algorithms, where a parallel local path is designed inspired by the local population concept in EA.

\vspace{-0.7em}
\subsection{Vision Transformers}
\vspace{-0.8em}
Since Transformer structure is proposed for the machine translation task~\cite{attn}, many improved language models~\citep{attn_elmo,attn_bert,attn_gpt1,attn_gpt2,attn_gpt3} follow it and achieve great results. Some later works~\citep{attn_improve1,attn_improve2,attn_improve3,attn_improve4,attn_improve5} improve the basic transformer structure for better efficiency. Inspired by the high performance of transformer in NLP and benefitted from abundant computation, recent ViT~\citep{attn_vit} introduces the transformer to vision classification for the first time and sparks a new wave of excitement for many vision tasks, \eg, detection~\citep{attn_detr,attn_deformabledetr}, segmentation~\citep{attn_setr,attn_transunet,attn_medt}, generative adversarial network~\citep{attn_transgan,attn_fat}, low-level tasks~\citep{attn_ipt}, video processing~\citep{attn_timesformer}, general backbone~\citep{attn_visualtransformers,attn_pvt,attn_swintransformer}, self-supervision~\citep{attn_igpt,attn_sit,attn_mocov3,attn_fdsl}, neural architecture search~\citep{attn_hat,attn_bossnas}, \etc. Many researchers have focused on improving the basic transformer structure~\citep{attn_deit,attn_deepvit,attn_cvt,attn_tnt,attn_cait,attn_ceit,attn_convit,attn_t2tvit,attn_cpvt,attn_halonets,attn_botnet,attn_coatnet,attn_localvit}, which is more challenging than other application-oriented works. Among these methods, DeiT~\citep{attn_deit} is undoubtedly a star job that makes the transformer performance better and training more data-efficient. 
Based on DeiT, this work focuses on improving the basic structure of the vision transformer, making it perform better considering both accuracy and efficiency. Besides, our approach supports multi-modal tasks using only one unified model while other transformer-based methods such as DeiT do not.

\vspace{-0.7em}
\subsection{Space Filling Curve}
\vspace{-0.8em}
A space-filling curve maps the multi-dimensional space into the 1D space, and we mainly deal with 2D SFCs in this paper. SFC acts as a thread that passes through all the points in the space while visiting each point only once, \ie, every point except the starting and ending points is connected to two adjacent line segments.
There are numerous kinds of SFCs, and the difference among them is in their ways of mapping to the 1D space. Peano~\cite{peano} first introduces a mapping from the unit interval to the unit square in 1890. Hilbert~\citep{hilbert} generalizes the idea to a mapping of the whole space. G.M.Morton~\citep{zorder} proposes Z-order for file sequencing of a static two-dimensional geographical database. Subsequently, many SFCs are proposed~\citep{curve2,curve3,curve4}, \eg, Sweep, Scan, Sierpiński, Lebesgue, Schoenberg, \etc. Some researchers further apply SFC methods to practical applications~\citep{curve1,curve6,curve7}, such as data mining and bandwidth reduction, but so far, almost none has been applied to computer vision. By introducing SFC in our EAT, this paper aims to provide a systematic and scalable paradigm that uses a unified model to deal with multi-modal data, keeping the network architecture and data structure independent of the data dimensionality.

\vspace{-1.0em}
\section{Preliminary Transformer}
\vspace{-0.5em}
The Transformer structure in vision tasks usually refers to the encoder and mainly builds upon the MSA layer, along with FFN, Layer Normalization (LN), and Residual Connection (RC) operations.

\noindent\textbf{MSA} is equivalent to the fusion of several SA operations that jointly attend to information from different representation subspaces, formulated as:
\vspace{-0.5em}
\begin{equation}
  \begin{aligned}
    \operatorname{MultiHead}(Q, K, V) &=\text { Concat }\left(\text { Head }_{1}, \text { Head }_{2}, \ldots, \text { Head }_{\mathrm{h}}\right) W^{O}, \\
    \text {here} \text { Head }_{\mathrm{i}} &= \operatorname{Attention}\left(Q W_{i}^{Q}, K W_{i}^{K}, V W_{i}^{V}\right) \\
                                              &= \operatorname{softmax}\left[\frac{Q W_{i}^{Q}\left(K W_{i}^{K}\right)^{T}}{\sqrt{d_{k}}}\right] V W_{i}^{V} = A V W_{i}^{V} ,
  \end{aligned}
  \vspace{-0.8em}
\end{equation}

where $W_{i}^{Q} \in \mathbb{R}^{d_{m} \times d_{k}}, W_{i}^{K} \in \mathbb{R}^{d_{m} \times d_{k}}, W_{i}^{V} \in \mathbb{R}^{d_{m} \times d_{v}}, W^{O} \in \mathbb{R}^{h d_{v} \times d_{m}}$ are parameter matrices; $d_m$ is the input dimension, while $d_k$ and $d_v$ are hidden dimensions of each projection subspace; $\mathrm{h}$ is the head number; $A \in \mathbb{R}^{l \times l}$ is the attention matrix and $l$ is the sequence length.

\noindent\textbf{FFN} consists of two cascaded linear transformations with a ReLU activation in between:
\begin{equation}
  \begin{aligned}
    \mathrm{FFN}(x)=\max \left(0, x W_{1}+b_{1}\right) W_{2}+b_{2},
  \end{aligned}
  \label{eq:ffn_ori}
\end{equation}
where $W_1$ and $W_2$ are weights of two linear layers, while $b_1$ and $b_2$ are corresponding biases. 

\noindent\textbf{LN} is applied before each layer of MSA and FFN, and the transformed $\hat{x}$ is calculated by:
\begin{equation}
  \begin{aligned}
    \hat{x} = x + [\text{MSA}~|~\text{FFN}](\text{LN}(x)).
  \end{aligned}
  \vspace{-0.6em}
\end{equation}

\vspace{-0.8em}
\section{EA-based Transformer}
\vspace{-0.5em}
In this section, we firstly expand the association between operators in EA and modules in Transformer, and derive a unified mathematical representation for each set of corresponding pairs, expecting an evolutionary explanation for \emph{why Transformer architecture works}. Then we introduce the SFC module to sequence 2D image that conforms to standard NLP format. Thus we may only focus on designing one unified model to solve multi-modal data. Finally, an EA-based Transformer only containing 1D operators is proposed for vision tasks, and we argue that this property is more suitable for hardware optimization in different scenarios.

\subsection{EA Interpretation of Transformer Structure}
\vspace{-0.5em}
Analogous to EA, the transformer structure has conceptually similar sub-modules, as shown in Figure~\ref{fig:eat}. For both of methods, we define the individual (patch embedding) as $\boldsymbol{x}_{i}=[ x_{i,1}, x_{i,2}, \dots, x_{i,d} ]$, where $d$ indicates sequence depth. Denoting $l$ as the sequence length, the population (patch embeddings) can be defined as $X = [\boldsymbol{x}_{1}, \boldsymbol{x}_{2}, \dots, \boldsymbol{x}_{l}]^{\mathrm{T}}$.

\begin{figure*}[htp]
  \centering
  \includegraphics[width=1.0\linewidth]{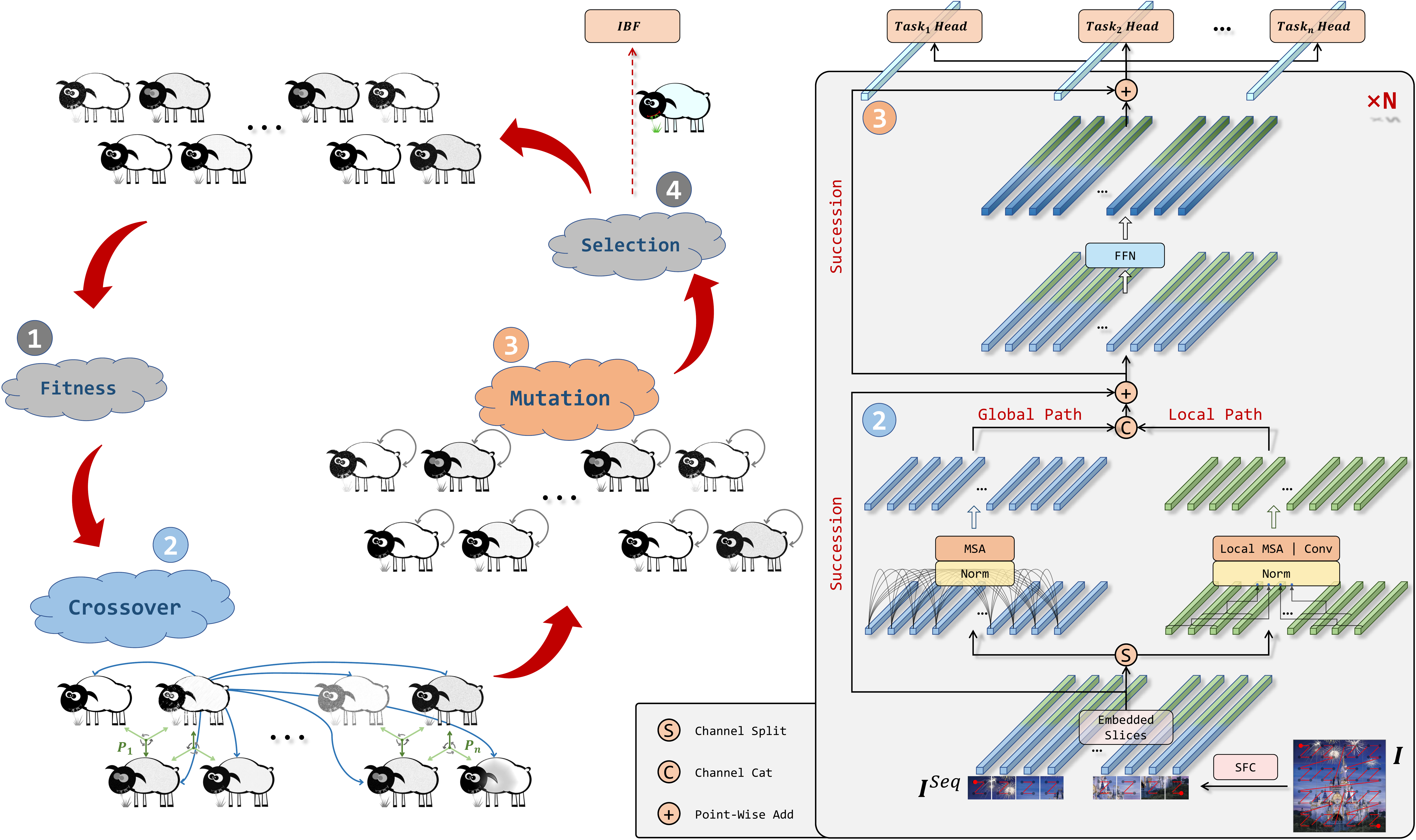}
  \caption{Structure of the proposed EAT. The right part illustrates the detailed procedure inspired by the left evolutionary algorithm. The blue and green lines in EA represent information interactions among individuals in global and local populations, which feeds back the design for global and local paths in our EAT. $P_1, \cdots, P_n$ represent biological evolution inside each local population. The proposed SFC module maps the 2D image into a 1D sequence so that only 1D operators are required.}
  \vspace{-1.0em}
  \label{fig:eat}
\end{figure*}

\noindent\textbf{Crossover Operator} \vs \textbf{SA Module}. In EA, the crossover operator aims at creating new individuals by combining parts of other individuals. In detail, for an individual $\boldsymbol{x}_{1}=[ x_{1,1}, x_{1,2}, \dots, x_{1,d} ]$, the operator will randomly pick another individual $\boldsymbol{x}_{i}=[ x_{i,1}, x_{i,2}, \dots, x_{i,d} ] (1 \leq i \leq l)$ in the global population, and then randomly replaces features of $\boldsymbol{x}_{1}$ with $\boldsymbol{x}_{i}$ to form the new individual $\hat{\boldsymbol{x}}_{1}$:
\begin{equation}
  \begin{aligned}
    \hat{\boldsymbol{x}}_{1, j} = \left\{\begin{array}{cc}
      \boldsymbol{x}_{i, j} & if~randb(j) \leqslant CR \\
      \boldsymbol{x}_{1, j} & otherwise
      \end{array}\right., ~~s.t. ~j = 1, 2, \dots, d,
  \end{aligned}
\end{equation}
where $randb(j)$ is the $j$-th evaluation of a uniform random number generator with outcome $\in [0, 1]$, $CR$ is the crossover constant $\in [0, 1]$ that is determined
by the user. We re-formulate this process as:
\begin{equation}
  \begin{aligned}
    \hat{\boldsymbol{x}}_{1}  &= \boldsymbol{w}_{1} \cdot \boldsymbol{x}_{1} + \boldsymbol{w}_{i} \cdot \boldsymbol{x}_{i} \\
                              &= \boldsymbol{w}_{1} \cdot \boldsymbol{x}_{1} + \textit{0} \cdot \boldsymbol{x}_{2} + \dots + \boldsymbol{w}_{i} \cdot \boldsymbol{x}_{i} + \dots + \textit{0} \cdot \boldsymbol{x}_{l} \\
                              &= W^{cr}_{1}X_{1} + \textit{0}X_{2} + \dots + W^{cr}_{i}X_{i} + \dots + \textit{0}X_{l}, \\
  \end{aligned}
  \label{eq:crossover}
\end{equation}
where $\boldsymbol{w}_{1}$ and $\boldsymbol{w}_{i}$ are vectors filled with zeros or ones, representing the selection of different features of $\boldsymbol{x}_{1}$ and $\boldsymbol{x}_{i}$. $W^{cr}_{1}$ and $W^{cr}_{i}$ are corresponding diagonal matrix representations.

For the SA module in transformer (MSA degenerates to SA when there is only one head), each patch embedding interacts with all embeddings. Without loss of generality, $\boldsymbol{x}_{1}=[ x_{1,1}, x_{1,2}, \dots, x_{1,d} ]$ interacts with the whole population, \ie, $X$, as follows:
\begin{equation}
  \vspace{-0.2em}
  \begin{aligned}
    \hat{\boldsymbol{x}}_{1}  &= A_{1}V_{1} + A_{2}V_{2} + \cdots +A_{l}V_{l} \\
                              &= A_{1}W^{V}X_{1} + A_{2}W^{V}X_{2} + \cdots +A_{l}W^{V}X_{l}, \\
  \end{aligned}
  \label{eq:sa}
  \vspace{-0.2em}
\end{equation}
where $A_{i} (i=1,2,\cdots,l)$ are attention weights of all patch embedding tokens that are calculated from queries and keys, $W^{V}$ is the parameter metric for the value projection. By comparing Equations~\ref{eq:crossover} with \ref{eq:sa}, we find that they share the same formula representation, and the crossover operation is a sparse global interaction while SA has more complex computing and modeling capabilities.

\noindent\textbf{Mutation Operator} \vs \textbf{FFN Module}. In EA, the mutation operator injects randomness into the population by stochastically changing specific features of individuals. For an individual $\boldsymbol{x}_{i}=[ x_{i,1}, x_{i,2}, \dots, x_{i,d} ] (1 \leq i \leq l)$ in the global population, it goes through the mutation operation to form the new individual $\hat{\boldsymbol{x}}_{i}$, formulated as follows:
\begin{equation}
  \vspace{-0.2em}
  \begin{aligned}
    \hat{x}_{i, j} = \left\{\begin{array}{cc}
      rand(v_{j}^L, v_{j}^H) x_{i, j} & if~randb(j) \leqslant MU \\
      x_{i, j} & otherwise
      \end{array}\right., ~~s.t. ~j = 1, 2, \dots, d,
  \end{aligned}
  \vspace{-0.2em}
\end{equation}
where the $MU$ is the mutation constant $\in [0, 1]$ that the user determines, $v_{j}^L$ and $v_{j}^H$ are lower and upper scale bounds of the $j$-th feature. Similarly, we re-formulate this process as:
\begin{equation}
  \vspace{-0.2em}
  \begin{aligned}
    \hat{\boldsymbol{x}}_{i}  &= \boldsymbol{w}_{i} \cdot \boldsymbol{x}_{i} = [w_{i,1}x_{i, 1}, w_{i,2}x_{i, 2}, \cdots, w_{i,l}x_{i, l}] \\
                              &= W^{mu}_{i}X_{i}, \\
  \end{aligned}
  \label{eq:mutation}
  \vspace{-0.2em}
\end{equation}
where $\boldsymbol{w}_{i}$ is a randomly generated vector that represents weights on different characteristic depths, while $W^{mu}_{i}$ is the corresponding diagonal matrix representation. 

For the FFN module in transformer, each patch embedding carries on directional feature transformation through cascaded linear layers (\cf, Equation~\ref{eq:ffn_ori}). Take one-layer linear as an example:
\begin{equation}
  \begin{aligned}
    \hat{\boldsymbol{x}}_{i}  &= W^{FFN}_{1}X_{i}, \\
  \end{aligned}
  \label{eq:ffn}
\end{equation}
where $W^{FFN}_{1}$ is the weight of the first linear layer of the FFN module, and it is applied to each position separately and identically. By comparing Equations~\ref{eq:mutation} and \ref{eq:ffn}, we also find that they have the same formula representation. FFN module is more expressive because it contains cascaded linear layers and non-linear ReLU activation layers in between, as depicted in Equation~\ref{eq:ffn_ori}.

\noindent\textbf{Population Succession} \vs \textbf{Residual Connection}. In the evolution of the biological population, individuals at the previous stage have a certain probability of inheriting to the next stage. This phenomenon is expressed in the transformer in the form of residual connection, \ie, patch embeddings of the previous layer are directly mapped to the next layer.

\noindent\textbf{Best Individual} \vs \textbf{Task-Related Token}. EAT chooses the enhanced task-related token (\eg, classification token) associated with all patch embeddings as the target output, while EA chooses the individual with the best fitness score among the population as the object.

\noindent\textbf{Necessity of Modules in Transformer.} As described in the work~\citep{s16}, the absence of the Crossover operator or Mutation operator will significantly damage the model's performance. Similarly, Dong~\etal~\cite{attn_notattention} explore the effect of MLP in Transformer and find that MLP stops the output from degeneration. Furthermore, removing MSA in Transformer would also greatly damage the effectiveness of the model. Thus we can conclude that global information interaction and individual evolution are necessary for Transformer, just like the global Crossover and individual Mutation in EA.

\subsection{Detailed Architecture of EAT} \label{section:eat}
\noindent\textbf{Local Path.} Inspired by works~\citep{lpea1,lpea2,lpea4,dynamic1,dynamic2} that introduce local and dynamic population concept to the evolutionary algorithm, we analogically introduce a local operator into the naive transformer structure in parallel with global MSA operation. As shown in Figure~\ref{fig:eat}, the input features are divided into global features (marked blue) and local features (marked green) at the channel level with ratio $p$, which are then fed into global and local paths to conduct feature interaction, respectively. The outputs of the two paths recover the original data dimension by concatenation operation. Thus the improved module is very flexible and can be viewed as a plug-and-play module for the current transformer structure. This structure also implicitly introduces the design of multi-modal fusion, analogous to grouping convolution with the group equaling two. Specifically, the local operator can be 1D convolution, local MSA, 1D DCN~\citep{dcnv2}, \etc. 
In this paper, we adopt 1D convolution as the local operator that is more efficient and parameter friendly, and the improved transformer structure owns a constant number of sequentially executed operations and $O(1)$ maximum path length between any two positions, \cf, Appendix~\ref{app:1}. Therefore, the proposed structure maintains the same parallelism and efficiency as the original vision transformer structure. And $p$ is set to 0.5 for containing the minimal network parameters and FLOPs, proved by Proposition \ref{thm:local_param} and Proposition \ref{thm:local_flops} in Appendix~\ref{app:2}.

\begin{theorem} \label{thm:local_param}
  The numbers of global and local branches in the $i-th$ mixed attention layer ($MA^i$) is $d_1$ and $d_2$. For the $i-th$ input sequence $F^i \in\mathbb{R}^{l \times d}$, the parameter of $MA^i$ is minimum when $d_1 = d_2 = d/2$, where $l$ and $d$ are sequence length and dimension, respectively.
\end{theorem}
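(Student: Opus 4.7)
The plan is to reduce the claim to minimizing a convex quadratic on the line $d_1+d_2 = d$, so the argument proceeds by (i) tallying the parameters of each branch, (ii) observing a common leading quadratic coefficient, and (iii) invoking either a one-line derivative test or the QM--AM inequality.

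First I would enumerate the parameters along each path. The global MSA branch of width $d_1$ carries the four projections $W^{Q},W^{K},W^{V}$ and $W^{O}$, each of total size proportional to $d_1^{2}$ (splitting into $h$ heads with $d_k=d_v=d_1/h$ leaves this count unchanged), contributing $\approx 4 d_1^{2}$ parameters. The local branch of width $d_2$ is instantiated symmetrically (local MSA, or a 1D convolution whose channel-projection term dominates a fixed-size kernel), so its parameter count can be written as $c\,d_2^{2}$ with the \emph{same} leading constant $c$ as the global branch. Bias terms, layer-norm scales, and anything else that is $O(d_i)$ are dominated by the $d_i^{2}$ contribution and can be absorbed into a lower-order remainder.

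Writing $P(d_1,d_2) = c\,d_1^{2} + c\,d_2^{2}$ subject to $d_1+d_2 = d$, substitute $d_2 = d - d_1$ to obtain
\[
 P(d_1) \;=\; c\,d_1^{2} + c\,(d-d_1)^{2},\qquad P'(d_1) = 4c\,d_1 - 2c\,d,\qquad P''(d_1) = 4c > 0,
\]
so the unique critical point $d_1 = d/2$ is a strict global minimum. Equivalently, the QM--AM (or Cauchy--Schwarz) inequality gives
\[
 d_1^{2} + d_2^{2} \;\ge\; \tfrac{1}{2}\,(d_1+d_2)^{2} \;=\; \tfrac{d^{2}}{2},
\]
with equality iff $d_1 = d_2 = d/2$; multiplying through by $c$ yields the proposition.

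The only real obstacle is a modeling one rather than a mathematical one: one must pin down the instantiation of the local operator precisely enough that both branches share a common quadratic coefficient in $d_i^{2}$. Once the paper's convention on the local operator is fixed and the $O(d_i)$ contributions are recognized as lower-order, the convexity/AM--QM step closes the argument in a single line.
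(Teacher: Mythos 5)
Your overall strategy---reduce to a one-variable quadratic on the line $d_1+d_2=d$ and minimize---is exactly the paper's, but you have left unproved the one step that actually carries the content of the proposition: that the two branches share the same leading quadratic coefficient. The paper's accounting (Table~\ref{tab:params_flops}) gives the global self-attention branch $4(d_1+1)d_1\approx 4d_1^2$ parameters, and the local path is a linear projection \emph{plus} a 1D convolution, totalling $(d_2+1)d_2+(kd_2+1)d_2\approx(k+1)d_2^2$. These coefficients coincide only because the paper fixes the kernel size at $k=3$, so that $k+1=4$. If the coefficients were $a\,d_1^2+b\,d_2^2$ with $a\neq b$, the constrained minimizer would sit at $d_1=\tfrac{b}{a+b}d$, not at $d/2$; for example, with $k=5$ the local coefficient becomes $6$ and the optimum shifts away from the balanced split. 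So the ``modeling obstacle'' you flag at the end is not a loose end you can defer---it is the entire proposition, and your proof as written assumes the conclusion's key ingredient. To close the gap you must write down the actual parameter counts of the two branches as the paper instantiates them and observe that $4=k+1$ for the default kernel size.

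A secondary, more minor discrepancy: the paper retains the linear terms ($4d_1+2d_2$) rather than absorbing them into a remainder, finds the exact vertex at $d_2=d/2+1/8$, $d_1=d/2-1/8$, and only then rounds to the integer solution $d_1=d_2=d/2$. Your QM--AM shortcut proves the statement for the pure quadratic part, which is fine asymptotically, but the proposition as stated is an exact claim about where the minimum of the full parameter count lies, and the honest argument has to acknowledge that the true real-valued minimizer is displaced by $1/8$ before integrality is invoked. Neither of these repairs is difficult, but both are needed before the convexity step you describe actually ``closes the argument in a single line.''
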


\begin{theorem} \label{thm:local_flops}
  The numbers of global and local branches in the $i-th$ mixed attention layer ($MA^i$) is $d_1$ and $d_2$. For the $i-th$ input sequence $F^i \in\mathbb{R}^{l \times d}$, the FLOPs of $MA^i$ is minimum when $d_1 = d/2 + l/8, d_2 = d/2 - l/8$, where $l$ and $d$ are sequence length and dimension, respectively.
\end{theorem}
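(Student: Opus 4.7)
The plan is to mirror the strategy of Proposition~\ref{thm:local_param} but replace the parameter count by a FLOPs count that additionally tracks the sequence length $l$. I would write the total cost as an explicit polynomial in the two branch widths, impose the constraint $d_1+d_2=d$, and minimize the resulting one-variable strictly convex quadratic.

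Concretely, for the global MSA branch on an $l\times d_1$ feature, the FLOPs are $4ld_1^2+2l^2 d_1$, collecting the four $d_1{\times}d_1$ linear projections ($W^Q, W^K, W^V$, and $W^O$, each applied at every sequence position) into the first term and the two batched products $QK^\top$ and $AV$ into the second. For the local 1D-convolutional branch on an $l\times d_2$ feature with kernel size $k$, the cost is $kld_2^2$. Substituting $d_2=d-d_1$ yields
\begin{equation*}
  F(d_1) \;=\; 4ld_1^2 \,+\, 2l^2 d_1 \,+\, kl(d-d_1)^2,
\end{equation*}
whose leading coefficient $(4+k)l$ is strictly positive, so $F$ is strictly convex with a unique interior minimizer. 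The first-order condition $F'(d_1)=0$ is a single linear equation and, once the kernel size matching the paper's convention is fixed, rearranges to $d/2 \pm l/8$ for the two branches.

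The main obstacle is the bookkeeping rather than the analysis: one must specify exactly which operations enter the MSA FLOPs count, check that multi-head splitting is invisible to it (since the heads partition $d_1$ and the per-head costs sum back to the single-head formula), and pin down the kernel size $k$ used for the local operator so that its quadratic coefficient matches that of the global branch. The characteristic $\pm l/8$ offset from the symmetric split $(d/2,d/2)$ is the signature of the attention-matrix term $2l^2 d_1$, which is present in the global branch but absent from the convolutional local branch; this asymmetry in the linear-in-width coefficients is precisely what displaces the optimum by $l/8$ from the parameter-optimal point, and its sign fixes which branch receives the extra $l/8$ channels. Verifying feasibility ($0 \le d_1 \le d$ in the standard regime $l \ll d$) and absorbing integrality by rounding are routine and follow the same pattern as in the proof of Proposition~\ref{thm:local_param}.
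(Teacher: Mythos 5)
Your overall strategy --- write the total FLOPs as a single-variable quadratic under the constraint $d_1+d_2=d$ and minimize the strictly convex quadratic --- is exactly the paper's (Appendix~\ref{app:2}). The gap is in the cost model for the local branch, and it is not mere bookkeeping: it changes the answer. The paper charges the local path $2d_2^2l + 2kd_2^2l$ FLOPs (a pointwise linear projection \emph{plus} the 1D convolution; cf.\ Table~\ref{tab:params_flops}) against $8d_1^2l + 4d_1l^2 + 3l^2$ for self-attention. At the architecture's actual kernel size $k=3$ the two quadratic coefficients, $(2+2k)l=8l$ and $8l$, coincide, so the optimum sits at $d/2$ displaced by $4l^2/(2\cdot 16l)=l/8$. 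Your local-branch count $kld_2^2$ drops the linear projection, so with $k=3$ your objective's minimizer is $d_1=(3d-l)/7$, not $d/2-l/8$. You correctly observe that the clean $d/2\pm l/8$ form requires the two branches' quadratic coefficients to match, but you propose to achieve this by "pinning down" $k$ (under your accounting you would need $k=4$), whereas $k$ is fixed at $3$ by the model; the match actually comes from the extra linear layer the paper includes in the local path. Without that term your derivation does not reach the stated minimizer.

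A second issue is the sign, which you leave as "$\pm$". Your (correct) observation that the $O(l^2d_1)$ attention-matrix cost penalizes only the global branch implies the optimum takes channels \emph{away} from it, i.e.\ $d_1=d/2-l/8$ and $d_2=d/2+l/8$. This agrees with the paper's own proof but is the opposite of the proposition statement ($d_1=d/2+l/8$), which appears to have $d_1$ and $d_2$ swapped; if you commit to the sign your reasoning dictates, you will reproduce the proof rather than the statement as written. Your remaining remarks (multi-head splitting being invisible to the count, rounding to integers --- the paper in fact rounds all the way back to $d_1=d_2=d/2$ using $l\ll d$) are consistent with the paper.
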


\begin{wrapfigure}{r}{3.3cm}
  \vspace{-1.7em}
  \centering
  \includegraphics[width=0.46\linewidth]{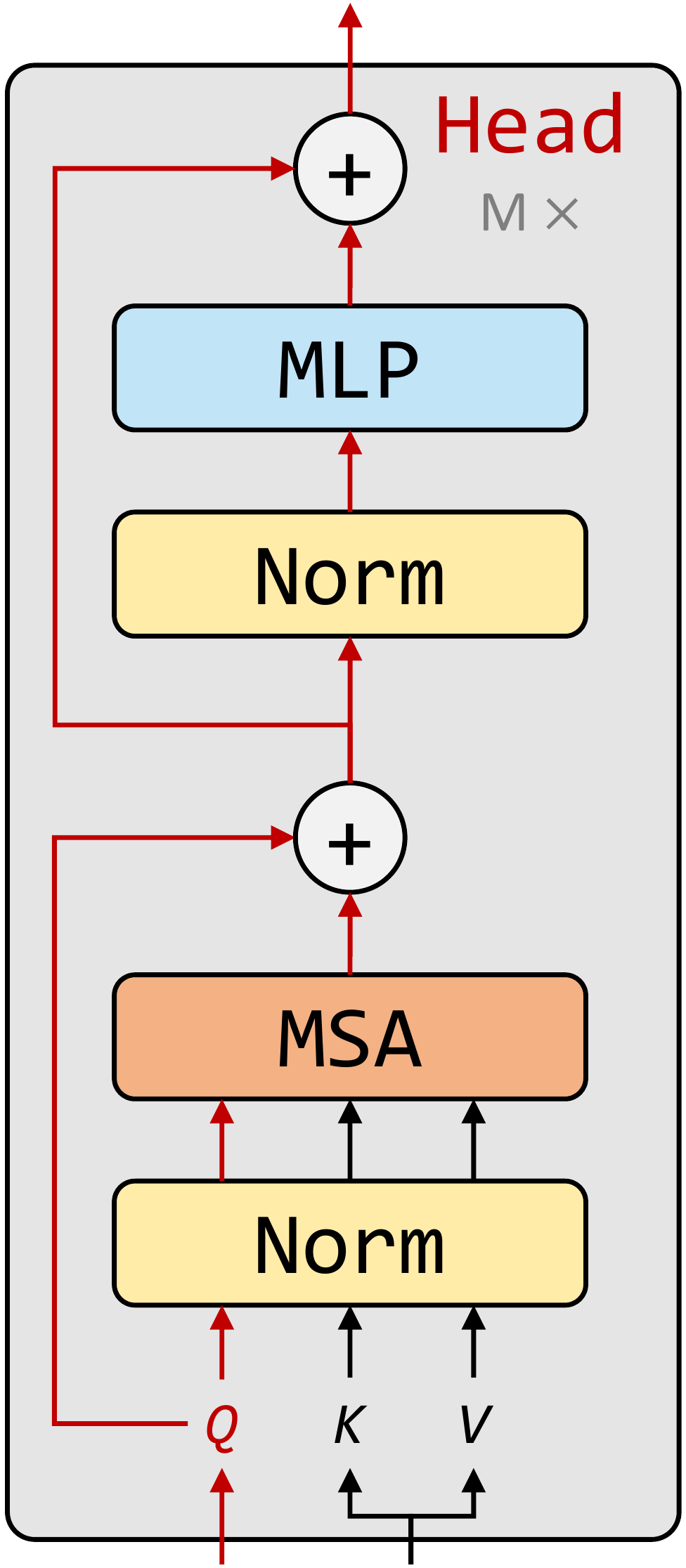}
  \captionsetup{width=0.2\textwidth}
  \caption{Structure of the proposed task-related Head. }
  \label{fig:head}
  \vspace{-1.7em}
\end{wrapfigure}
\noindent\textbf{Task-Related Head.} Furthermore, we remove the Cls token that has been used since ViT~\citep{attn_vit}, and propose a new paradigm to vision transformer that uses the task-related Head to obtain the corresponding task output through the final features. In this way, the transformer-based backbone can be separated from specific tasks. Each transformer layer does not need to interact with the Cls token, so the computation amount reduces slightly from $O((n+1)^2)$ to $O(n^2)$. Specifically, we use cross-attention to implement this head module. As shown in Figure~\ref{fig:head}, $K$ and $V$ are output features extracted by the transformer backbone, while $Q$ is the task-related token that integrates information through cross-attention. $M$ indicates that each Head contains M layers and is set to 2 in the paper. This design is flexible, with a negligible computation amount compared to the backbone. 

\noindent\textbf{Convergence Analysis.} The convergence of Adam has been well studied in previous work~\citep{reddi2018adaptive}. It can be verified that global path and local path are both Lipschitz-smooth and gradient-bounded, as long as the transformer embedding is Lipschitz-smooth and gradient-bounded. In this case, the iteration sequence of the training algorithm converges to a stationary point of the learning objective with a convergence rate $O(1/sqrt(T))$, where $T$ is the number of iterations.

\subsection{Space-Filling Curve Module}
\vspace{-0.7em}
In this subsection, we mainly deal with two-dimensional SFCs. The mathematical definition of SFC is a surjective function that continuously maps a closed unit interval:
\begin{equation}
  \begin{aligned}
    \omega =\{z \mid 0 \leq z \leq 1\}=[0,1], \text { where } z \in \omega
  \end{aligned}
\end{equation}
as the domain of the function and is mapped onto the unit square:
\begin{equation}
  \begin{aligned}
    \Omega =\{(x, y) \mid 0 \leq x \leq 1, \quad 0 \leq y \leq 1\}, \text { where }(x, y) \in \Omega
  \end{aligned}
\end{equation}
which is our range of the function. One can easily define a surjective mapping from $\omega$ onto $\Omega$, but it is not injective~\citep{curve3}, \ie, not bijective. Nevertheless, for a finite 1D sequence that maps 2D pixels, it contains bijective property.

\begin{remark}
  According to the definition of SFC, it must be surjective. For a finite 1D to 2D mapping, there is bound to be an intersection if the SFC is not injective, where there are at least two 1D points mapping to one same 2D point, \textit{i.e.}, $z_1 = SFC(x, y)$ and $z_2 = SFC(x, y)$. This is in contradiction to the finite SFC filling definition. Therefore, a finite SFC is bijective, and we can carry out two-way data lossless transformation through the pre-defined SFC.
\end{remark}

\begin{wrapfigure}{r}{3.6cm}
  \vspace{-1.0em}
  \centering
  \includegraphics[width=1.0\linewidth]{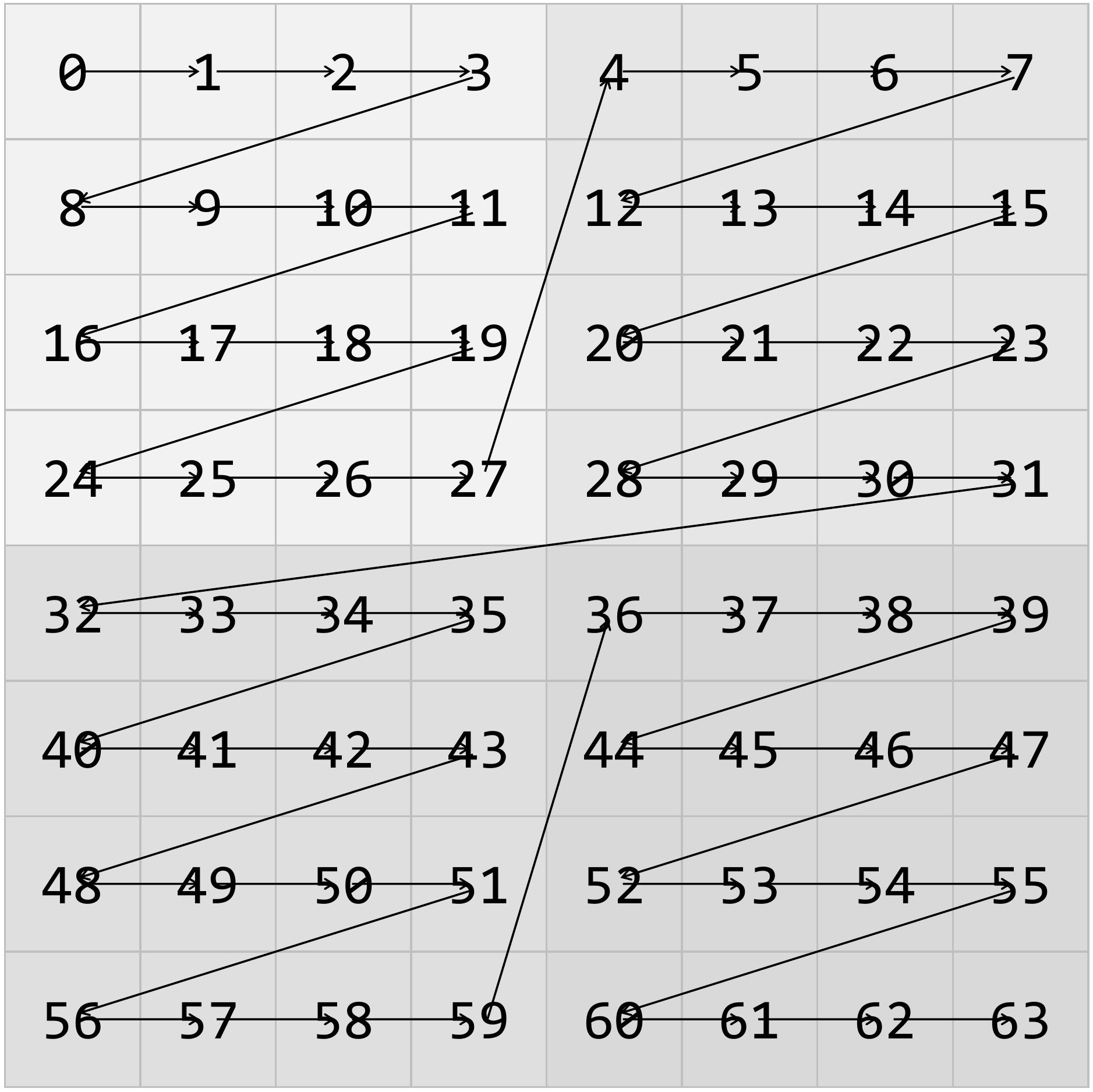}
  \caption{Indexing illustration of the proposed SweepInSweep SFC.}
  \label{fig:curve_vit}
\end{wrapfigure}

We propose an SFC module to map 2D images to the uniform 1D sequential format based on the above content. Therefore, we can address sequence inputs to handle multi-modal tasks in \textit{one} unified model, \eg, TIR and VLN. Taking image sequence as an example. As shown in the right bottom of Figure~\ref{fig:eat}, the input image $\boldsymbol{I} \in \mathbb{R}^{H \times W \times 3}$ goes through the SFC module to be $\boldsymbol{I}^{Seq} \in \mathbb{R}^{HW \times 3} $, formulated as:
\begin{equation}
  \begin{aligned}
    \boldsymbol{I}^{Seq}_{k} = \boldsymbol{I}_{i, j} , \text { where } k = SFC(x, y),
  \end{aligned}
\end{equation}
where $1 \leq i \leq H$, $1 \leq j \leq W$, $1 \leq k \leq HW$, and $SFC$ can be any manually specified space-filling curve. The difference among SFCs mainly reflects how the 1D sequence preserves the 2D spatial structure. As an extension, SFC can be defined to three or higher dimensional data space and can also serialize the intermediate network features, \eg, the output feature maps of stage 3 in ResNet~\citep{resnet}. In this paper, we use \emph{Embedded Slices} instead of \emph{Embedded Patches} as the input of backbone that is consistent with NLP input format. As shown in Figure~\ref{fig:curve_vit}, we further propose a new SFC named SweepInSweep (SIS) that is equivalent to the input mode of the existing transformer-based methods, \eg, ViT and DeiT, where the slice size equaling $16$ is equivalent to that patch size equaling $4$. The paper uses this mode for SFC by default. More visualizations of different SFCs are presented in Appendix~\ref{app:3}.

\vspace{-0.8em}
\section{Experiments}
\vspace{-0.7em}
\subsection{Implementation Details.} 
\vspace{-0.7em}
We choose SOTA DeiT~\citep{attn_deit} as the baseline and follow the same experimental setting. By default, we train each model for 300 epoch from scratch without pre-training and distillation. The classification task is evaluated in ImageNet-1k dataset~\citep{imagenet}, and we conduct all experiments on a single node with 8 V100 GPUs. TIR is conducted on Fashion200k~\citep{tir_fashion200k}, MIT-States~\citep{tir_mitstates}, and CSS~\citep{tir_css} datasets, while VLN is performed on the R2R navigation dataset~\citep{vln_r2r}. Detailed EAT variants can be viewed in Appendix~\ref{app:4}, and we supply the train and test codes in the supplementary material.

\begin{table}[htp]
  \centering
  \caption{Comparison with SOTA CNN-based and Transformer-based methods on ImageNet-1k dataset. Reported results are from corresponding papers. \alambicsmall: Distillation; 1ke: Training 1000 epochs.
  }
  \label{table:com_sotas}
  \renewcommand{\arraystretch}{0.6}
  \resizebox{0.95\textwidth}{!}{
  \begin{tabular}{C{72pt}C{96pt}C{31pt}C{36pt}C{31pt}C{26pt}C{26pt}C{36pt}}
  \toprule
  \multirow{2}{*}{Network} & \multirow{2}{*}{\makecell[c]{Params. $\downarrow$ \\(M)}} & \multirow{2}{*}{\makecell[c]{GFlops. $\downarrow$}} & \multicolumn{2}{c}{\makecell[c]{Images/s $\uparrow$}} & \multirow{2}{*}{\makecell[c]{Image \\Size}} & \multirow{2}{*}{Top-1} & \multirow{2}{*}{\makecell[c]{Inference \\Memory}} \\
  \cmidrule(lr){4-5}
  & & & GPU & CPU & & & \\
  \toprule
  \rowcolor{lightgray} \multicolumn{8}{c}{CNN-Based Nets} \\
  \midrule
  ResNet-18                 & 11.7M & 1.8   & 4729  & 77.9  & $224^{2}$ & 69.8 & 1728 \\
  ResNet-50                 & 25.6M & 4.1   & 1041  & 21.1  & $224^{2}$ & 76.2 & 2424 \\
  ResNet-101                & 44.5M & 7.8   & 620   & 13.2  & $224^{2}$ & 77.4 & 2574 \\
  ResNet-152                & 60.2M & 11.5  & 431   & 9.4   & $224^{2}$ & 78.3 & 2694 \\
  \midrule
  RegNetY-4GF               & 20.6M & 4.0   & 976   & 22.1  & $224^{2}$ & 80.0 & 2828 \\
  RegNetY-8GF               & 39.2M & 8.0   & 532   & 12.9  & $224^{2}$ & 81.7 & 3134 \\ 
  RegNetY-16GF              & 83.6M & 15.9  & 316   & 8.4   & $224^{2}$ & 82.9 & 4240 \\
  \midrule
  EfficientNet-B0           & \pzo5.3M  & 0.4   & 2456  & 49.4  & $224^{2}$ & 77.1 & 2158 \\
  EfficientNet-B2           & \pzo9.1M  & 1.0   & 1074  & 22.6  & $260^{2}$ & 80.1 & 2522 \\
  EfficientNet-B4           & 19.3M     & 4.5   & 313   & 6.1   & $380^{2}$ & 82.9 & 5280 \\
  EfficientNet-B5           & 30.4M     & 10.4  & 145   & 3.0   & $456^{2}$ & 83.6 & 7082 \\
  EfficientNet-B7           & 66.3M     & 38.2  & 48    & 1.0   & $600^{2}$ & 84.3 & 14650 \\
  \midrule
  NFNet-F0                  & 71.5M     & 9.6   & 574   & 12.5  & $256^{2}$ & 83.6 & 2967 \\
  \toprule
  \rowcolor{lightgray}\multicolumn{8}{c}{Transformer-Based Nets} \\
  \midrule
  ViT-B/16                  & 86.6M     & 17.6  & 291   & 10.3  & $384^{2}$ & 77.9 & 2760 \\
  ViT-L/16                  & 304M      & 61.6  & 92    & 2.7   & $384^{2}$ & 76.5 & 4618 \\
  \midrule
  DeiT-Ti                   & \pzo5.7M  & 1.3   & 2437  & 89.8  & $224^{2}$ & 72.2 & 1478 \\
  DeiT-S                    & 22.1M     & 4.6   & 927   & 31.4  & $224^{2}$ & 79.8 & 1804 \\
  DeiT-B                    & 86.6M     & 17.6  & 290   & 10.2  & $224^{2}$ & 81.8 & 2760 \\
  \midrule
  EAT-Ti                    & \pzo5.7M \grayer{(\pzo4.8M~+\pzo0.9M)}  & 1.0   & 2442  & 95.4  & $224^{2}$ & 72.7 & 1448 \\
  EAT-S                     & 22.1M \grayer{(18.5M~+\pzo3.6M)}        & 3.8   & 1001  & 34.4  & $224^{2}$ & 80.4 & 1708 \\
  EAT-M                     & 49.0M \grayer{(41.0M~+\pzo8.0M)}        & 8.4   & 519   & 18.4  & $224^{2}$ & 82.1 & 2114 \\
  EAT-B                     & 86.6M \grayer{(72.4M~+14.2M)}           & 14.8  & 329   & 11.7  & $224^{2}$ & 82.0 & 2508 \\
  \midrule
  DeiT-Ti~\alambic          & \pzo5.9M                                & 1.3   & 2406  & 87.1  & $224^{2}$ & 74.5 & 1476 \\
  DeiT-Ti~/~1ke~\alambic    & \pzo5.9M                                & 1.3   & 2406  & 87.1  & $224^{2}$ & 76.6 & 1476 \\
  \midrule
  EAT-Ti~\alambic           & \pzo5.7M \grayer{(\pzo4.8M~+\pzo0.9M)}  & 1.0   & 2442  & 95.4  & $224^{2}$ & 74.8 & 1448 \\
  EAT-Ti~/~1ke~\alambic     & \pzo5.7M \grayer{(\pzo4.8M~+\pzo0.9M)}  & 1.0   & 2442  & 95.4  & $224^{2}$ & 77.0 & 1448 \\
  \bottomrule
  \end{tabular}
  }
  \vspace{-1.0em}
\end{table}

\vspace{-0.7em}
\subsection{Comparison with SOTA Methods}
\vspace{-0.7em}
The classification task is conducted on the ImageNet-1k dataset without external data. As noted in Table~\ref{table:com_sotas}, we list four kinds of EAT of different magnitudes with \emph{backbone + head} format, \ie, Tiny, Small, Middle, and Base. They have comparable parameters with SOTA DeiT, but \emph{a higher accuracy, a lower inference memory occupancy, and more throughput} in both GPU (tested with batch size equaling 256 in single V100) and CPU (tested with batch size equaling 16 in i7-8700K @3.70GHz). Our method is trained in 224 resolution for 300 epochs without distillation, so our result is slightly lower than large CNN-based EfficientNet in terms of Top-1. 
Moreover, we visualize comprehensive Top-1 results of different methods under various evaluation indexes in Figure~\ref{fig:exp_vis_sotas} and present the results of a series of EAT models with different model sizes. \textbf{\emph{1)}} Left two subgraphs show that our EAT outperforms the strong baseline DeiT, and even obtains better results than EfficientNet in both GPU and CPU throughput, \eg, EAT-M. \textbf{\emph{2)}} Besides, our approach is very competitive for resource-restrained scenarios, \eg, having a lower inference memory occupancy and parameters while obtaining considerable accuracy. \textbf{\emph{3)}} Also, we find accuracy saturation in transformer structure that is also mentioned in recent works~\citep{attn_cait,attn_deepvit}, and we look forward to follow-up works to solve this problem. Furthermore, some techniques to help improve the accuracy of the model are applied, and our EAT model obtains a better accuracy.

\begin{figure*}[htp]
  \vspace{-0.5em}
  \centering
  \includegraphics[width=1.0\linewidth]{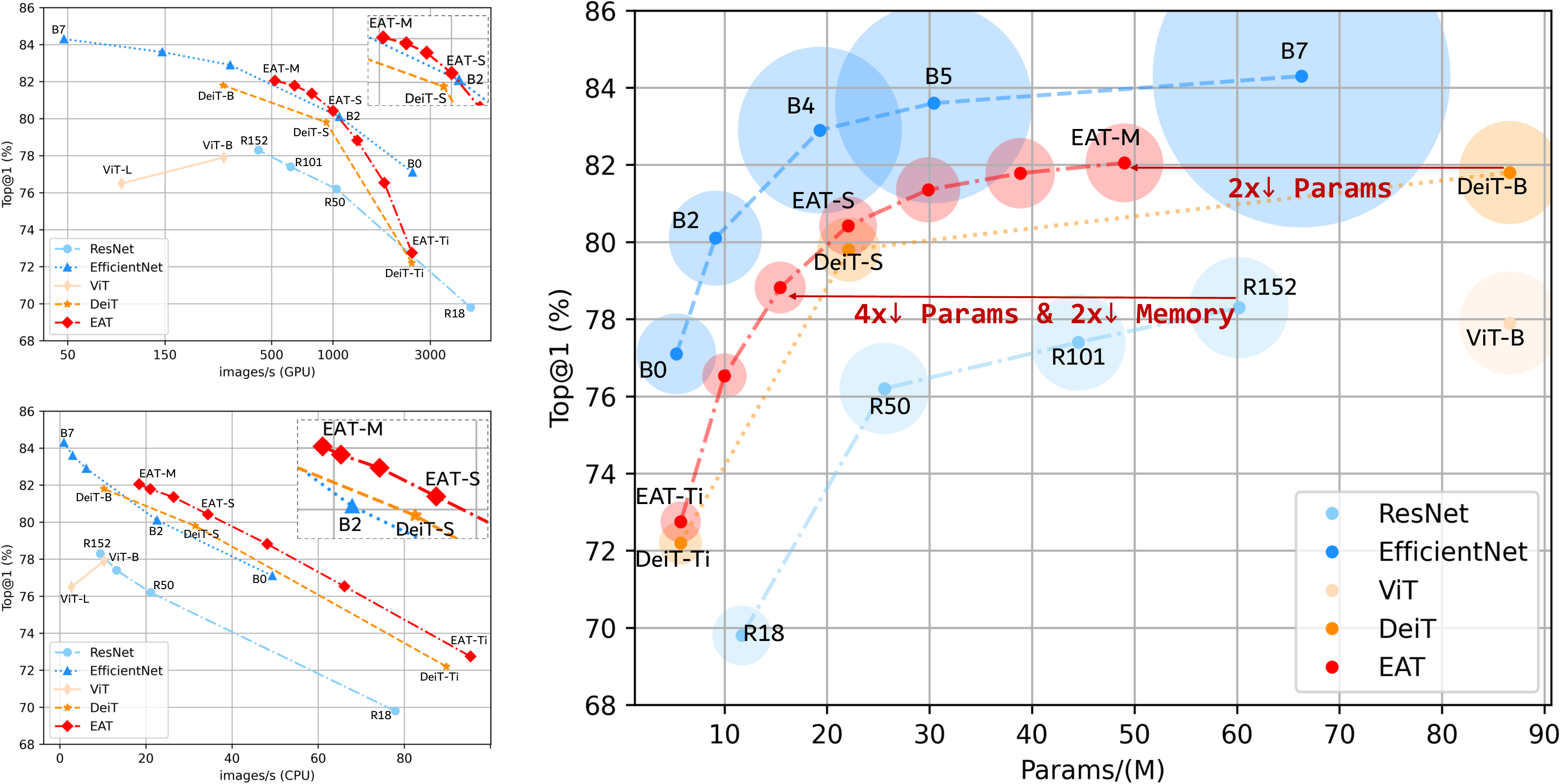}
  \caption{Comparison of different methods under different evaluation indexes. From left to right are GPU throughput, CPU throughput, and model parameters. The smaller the circle in the right sub-figure, the smaller the inference memory occupancy. Please zoom in for a better visual effect.}
  \label{fig:exp_vis_sotas}
  \vspace{-0.6em}
\end{figure*}

\begin{table*}[htp]
  \begin{minipage}{0.48\textwidth}
    \captionsetup{width=0.9\textwidth}
    \caption{Retrieval performance on three datasets in rank-1. The vision and language models are replaced by other nets, marked in the lower right footnote as $V$ and $L$. $^\star$: Our reproduction; -: No corresponding result. The best number is in bold.}
    \vspace{-0.3em}
    \renewcommand{\arraystretch}{0.7}
    \begin{center}
    \resizebox{0.9\textwidth}{!}{
      \begin{tabular}{l|C{36pt}C{50pt}C{50pt}}
        \toprule
        Method & CSS & MIT-States & Fashion200k \\
        \midrule
        TIRG~\citep{tir_css}  & -     & 12.2  & 14.1 \\
        TIRG$^\star$          & 70.1  & 13.1  & 14.0 \\
        \midrule
        + TR$_L$              & 70.7  & 13.2  & 14.3 \\
        + EAT$_L$             & 71.0  & 13.4  & 14.9 \\
        \midrule
        + R101$_V$            & 73.0  & 14.3  & 19.0 \\
        + EAT$_V$             & 73.5  & 14.9  & 19.9 \\
        + EAT$_{V+L}$         & \textbf{73.8} & \textbf{15.0} & \textbf{20.1} \\
        \bottomrule
        \end{tabular}
    }
    \end{center}
    \label{table:com_tir}
  \end{minipage}
  \begin{minipage}{0.48\linewidth}  
    \captionsetup{width=0.9\textwidth}
    \caption{VLN experiment on the R2R dataset. $^\star$: Our reproduction with suggested parameters by authors. NE: Navigation Error; SPL: Success weighted by (normalized inverse) Path Length; nDTW: normalized Dynamic Time Warping.}
    \vspace{-0.3em}
    \renewcommand{\arraystretch}{0.76}
    \begin{center}
    \resizebox{0.9\textwidth}{!}{
      \begin{tabular}{C{3pt}l|C{36pt}C{50pt}C{50pt}}
      \toprule
      & Method & NE $\downarrow$ & SPL $\uparrow$ & nDTW $\uparrow$ \\
      \midrule
      \multirow{4}{*}{{\rotatebox[origin=c]{90}{Seen}}}
      & PTA$^\star$           & 3.98  & 0.61  & 0.71 \\
      & + EAT$_L$             & \textbf{3.84}  & \textbf{0.62}  & \textbf{0.72} \\
      & + EAT$_V$             & 3.95  & 0.61  & 0.71 \\
      & + EAT$_{V+L}$         & 3.95 & 0.61 & 0.71 \\
      \midrule
      \multirow{4}{*}{{\rotatebox[origin=c]{90}{Unseen}}}
      & PTA$^\star$           & 6.61  & 0.28  & 0.48 \\
      & + EAT$_L$             & \textbf{6.41}  & \textbf{0.33}  & \textbf{0.51} \\
      & + EAT$_V$             & 6.63  & 0.29  & 0.49 \\
      & + EAT$_{V+L}$         & 6.43 & 0.32 & \textbf{0.51} \\
      \bottomrule
      \end{tabular}
    }
    \end{center}
    \label{table:com_vln}
  \end{minipage}
  \vspace{-0.5em}
\end{table*}

\subsection{Multi-Modal Experiments}
\vspace{-0.7em}
\noindent\textbf{Text-based Image Retrieval.}
TIR is the task of searching for semantically matched images in a large image gallery according to the given search query, \ie, an image and a text string. Table~\ref{table:com_tir} illustrates the quantitative results. Naive transformer and EAT$_{L}$ have the same number of layers, and EAT$_{V}$ has similar parameters as R101$_{V}$. The results show that EAT brings positive benefits by replacing either original sub-network, even though vision and language models share one structure. \\
\noindent\textbf{Vision Language Navigation.}
We further assess our approach by VLN experiments, where an agent needs to follow a language-speciﬁed path to reach a target destination with the step-by-step visual observation. We choose the PTA~\citep{vln_pta} as the base method, and Table~\ref{table:com_vln} shows results under the same experimental setting. Our unified EAT obtains comparable results with PTA in the seen mode but achieves greater improvement in the unseen mode, meaning better robustness. When only replacing the vision EAT, the result changes very little. We analyze the reason that the used visual feature dimension of EAT-B is smaller than the original ResNet-152 (from 2048 to 384), and reinforcement learning is sensitive to this. Overall, our unified EAT still contributes to the PTA method.
\vspace{-0.5em}

\subsection{Ablation Study}

\begin{table*}[t]
  \caption{Ablation study for several items on ImageNet-1k. \emph{Default} represents the baseline method based on EAT-B. The gray font indicates that the corresponding parameter is not be modified.}
  \label{tab:ablation}
  \renewcommand{\arraystretch}{1.0}
  \centering

  \scalebox{0.86}
  {\small
  \begin{tabular}{C{30pt}|C{24pt}|C{25pt}|C{23pt}|C{25pt}|C{30pt}|C{26pt}|C{42pt}|C{50pt}|C{25pt}|C{25pt}}

  \hline
  \multirow{2}{*}{\makecell[c]{Ablation \\Items}} & \multirow{2}{*}{\makecell[c]{Head \\Layers}} & \multirow{2}{*}{\makecell[c]{Local \\Ratio}} & \multirow{2}{*}{\makecell[c]{FFN \\Ratio}} & \multirow{2}{*}{\makecell[c]{Top-1}} & \multirow{2}{*}{\makecell[c]{Ablation \\Items}} & \multirow{2}{*}{\makecell[c]{Kernel \\Size}} & \multirow{2}{*}{\makecell[c]{Local \\Operator}} & \multirow{2}{*}{\makecell[c]{SFC \\Mode}} & \multirow{2}{*}{\makecell[c]{Image \\Size}} & \multirow{2}{*}{\makecell[c]{Top-1}} \\
  & & & & & & & & & & \\
  \hline
  \rowcolor{lightgray} Default & 1 & 0.50 & 4 & & & 3 & 1D Conv & SIS & $224^{2}$ & 80.264 \\
  \hline
  \hline
  \multirow{5}{*}{\makecell[c]{Head \\Layer}} & 0 & \grayer{0.50} & \grayer{4} & 79.692 & \multirow{3}{*}{\makecell[c]{Kernel \\Size}} & 1 & \grayer{1D Conv} & \grayer{SIS} & \grayer{$224^{2}$} & 79.128 \\
  & 2 & \grayer{0.50} & \grayer{4} & 80.422 &  & 5 & \grayer{1D Conv} & \grayer{SIS} & \grayer{$224^{2}$} & 80.256 \\
  & 3 & \grayer{0.50} & \grayer{4} & 80.435 &  & 7 & \grayer{1D Conv} & \grayer{SIS} & \grayer{$224^{2}$} & 80.052 \\
  \cline{6-11} 
  & 4 & \grayer{0.50} & \grayer{4} & 80.454 & \multirow{2}{*}{\makecell[c]{Local \\Operator}} & \grayer{3} & Local MSA & \grayer{SIS} & \grayer{$224^{2}$} & 79.940 \\
  & 5 & \grayer{0.50} & \grayer{4} & 80.446 &  & \grayer{3} & DCN & \grayer{SIS} & \grayer{$224^{2}$} & 68.070 \\
  \hline
  \multirow{2}{*}{\makecell[c]{Local \\Ratio}} & \grayer{1} & 0.25 & \grayer{4} & 80.280 & \multirow{4}{*}{\makecell[c]{SFC \\Mode}} & \grayer{3} & \grayer{1D Conv} & Sweep & $256^{2}$ & 71.280 \\
  & \grayer{1} & 0.75 & \grayer{4} & 79.518 &  & \grayer{3} & \grayer{1D Conv} & Scan & $256^{2}$ & 74.164 \\
  \cline{1-5}
  \multirow{2}{*}{\makecell[c]{FFN \\Ratio}} & \grayer{1} & \grayer{0.50} & 2 & 77.422 &  & \grayer{3} & \grayer{1D Conv} & Hilbert & $256^{2}$ & 79.842 \\
  & \grayer{1} & \grayer{0.50} & 3 & 79.176 &  & \grayer{3} & \grayer{1D Conv} & Z-Order / SIS & $256^{2}$ & 80.438 \\
  \hline
  \end{tabular}}
  \vspace{-0.5em}
\end{table*}

\vspace{-0.5em}
Table~\ref{tab:ablation} shows ablation results for several items on ImageNet-1k. 
\textbf{\textit{1) Head Layer.}} Within a certain range, the model performance increases with the increase of the head layer, and it equals 2 in the paper for trading off model effectiveness and parameters.
\textbf{\textit{2) Local Ratio.}} Large local ratio causes performance falling and increases the parameters (\cf, Section~\ref{section:eat}), so $p$ equaling 0.5 is a better choice.
\textbf{\textit{3) FFN Ratio.}} Similar to the conclusion of work~\citep{attn}, a lower FFN ratio results in a drop in accuracy.
\textbf{\textit{4) Kernel Size.}} Large kernel can lead to accuracy saturation, so we set it to 3 in the paper.
\textbf{\textit{5) Local Operator.}} We replace the local operator with local MSA and 1D DCN. The former slightly reduces the accuracy (a structure similar to the global path may result in learning redundancy), while the latter greatly reduces the accuracy (possibly hyper-parameters are not adapted).
\textbf{\textit{6) SFC Mode.}} Since some SFCs, \eg, Hilbert and Z-Order, only deal with images with the power of side lengths, we set the image size to 256 here. Results show that the images serialized by Z-Order or SIS get better results because they can better preserve 2D spatial information that is important for 1D-convolution.

\subsection{Model Interpretation by Attention Visualization}

\begin{wrapfigure}{r}{4.3cm}
  \vspace{-3.0em}
  \centering
  \includegraphics[width=1\linewidth]{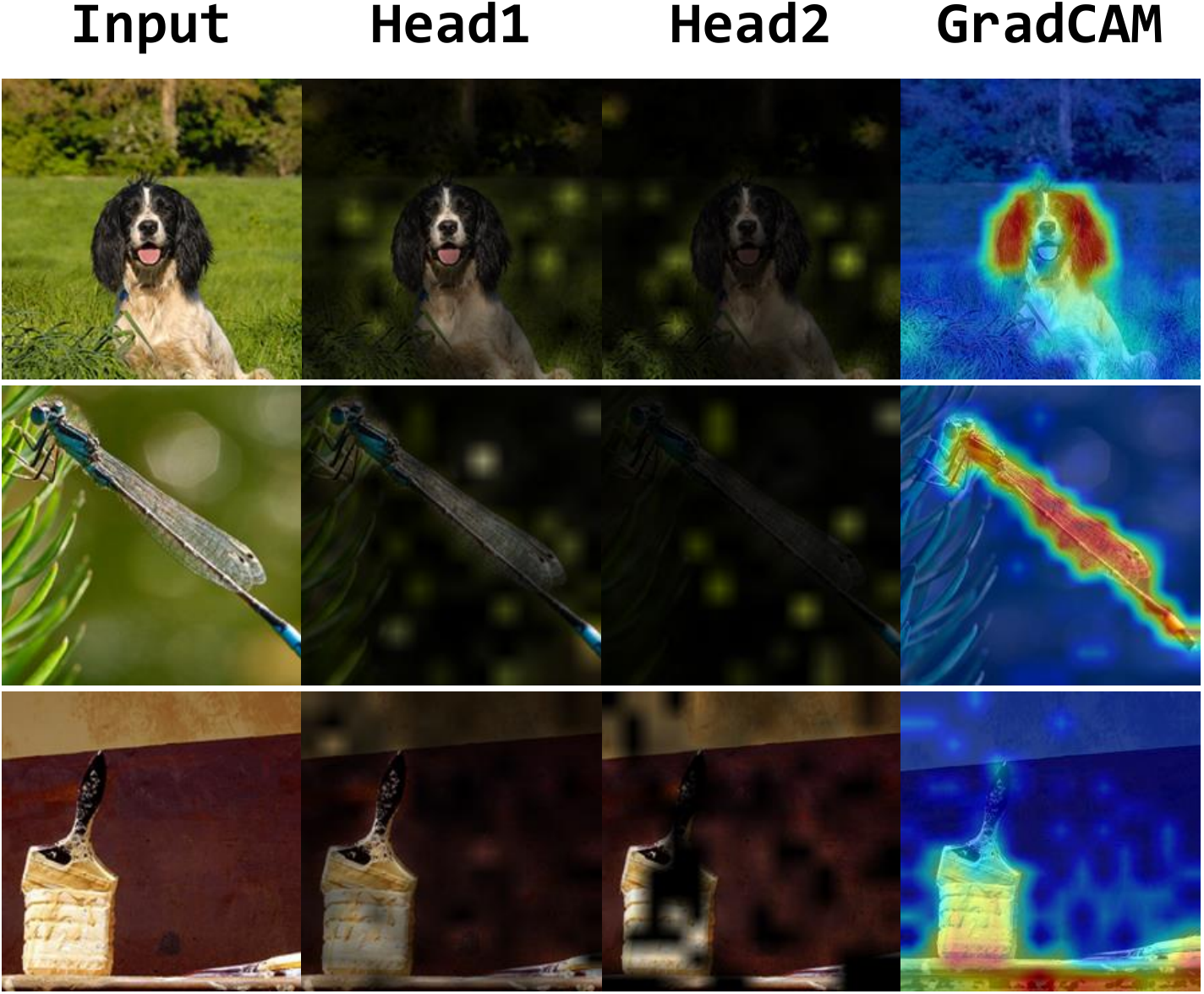}
  \captionsetup{width=0.31\textwidth}
  \caption{Visualization of attention maps for head layers and Grad-CAM results.}
  \label{fig:exp_vis_head_avg_paper}
\end{wrapfigure}

We average attention maps of each head layer and visualize them to illustrate which parts of the image the model is focusing on. As shown in Figure~\ref{fig:exp_vis_head_avg_paper}, Head1 pays more attention to subjects that are meaningful to the classification results, while the deeper Head2 integrates features of Head1 to form the final vector for classification that focuses on more comprehensive areas. Also, Grad-CAM~\citep{gradcam} is applied to highlight concerning regions by our model and results consistently demonstrate the effectiveness of our proposed Head module. Note that the Cls Head contains two head layers, denoted as Head1 and Head2 in the figure. 
We also visualize attention maps of middle layers in Appendix~\ref{app:7} and find that the global path prefers to model long-distance over DeiT. 

\section{Conclusions}
This paper explains the rationality of vision transformer by analogy with EA, which brings inspiration for the neural architecture design. Besides, the introduced SFC module serializes the multi-modal data into a sequential format, and we achieve the paradigm of using one unified model to address multi-modal tasks. Abundant experimental results demonstrate the effectiveness and flexibility of our approach. We hope this work may bring some enlightenment to network interpretability and thinking to design the unified model for multi-modal tasks. 
Recently, we observe that EA has been successfully used in other areas, \eg, RL (PES~\citep{openaies}, ERL~\citep{s12}), NAS (SPOS~\citep{s13}, CARS~\citep{s14}), Hyperparameter Adjustment (PBT~\citep{s15}), \etc. Nevertheless, how to combine heuristic algorithms such as EA with other fields is still a difficult problem, and we will further explore potential possibilities in the future.

\myparagraph{Acknowledgment} We thank all authors for their excellent contributions as well as anonymous reviewers and chairs for their constructive comments. This work is partially supported by the National Natural Science Foundation of China (NSFC) under Grant No. 61836015.

\bibliographystyle{plain}
\bibliography{eat}

\clearpage

\appendix

\section{Properties of Different Layers}\label{app:1}
\begin{table}[h!]
  \caption{Parameters (Params), floating point operations (FLOPs), Sequential Operations, and Maximum Path Length for different layer types. Assume that the input data is a sequence of length $l$ and depth $d$. $k$ is the kernel size for 1D convolution and local self-attention, $n$ is the query length, and $inf$ stands for infinity.}
  \centering
  \begin{tabular}{C{90pt}|C{50pt}C{100pt}C{46pt}C{46pt}}
      \toprule
      \multirow{2}{*}{Layer Type} & \multirow{2}{*}{Params} & \multirow{2}{*}{FLOPs} & \multirow{2}{*}{\shortstack{Sequential \\Operations}} & \multirow{2}{*}{\shortstack{Maximum \\Path Length}} \\
      & & & & \\
      \midrule
      Linear & $(d+1)d$ & $2d^2l$ & $O(1)$ & $O(inf)$ \\
      1D Convolution & $(kd+1)d$ & $2d^2lk$ & $O(1)$ & $O(n/k)$ \\
      Self-Attention & $4(d+1)d$ & $8d^2l+4dl^2+3l^2$  & $O(1)$ & $O(1)$ \\
      Local Self-Attention & $4(d+1)d$ & $8d^2l+4dlk+3ld$ & $O(1)$ & $O(n/k)$ \\
      Cross-Attention & $4(d+1)d$ & $4d^2l+(4dl+2d^2+3l)n$  & $O(1)$ & $O(1)$ \\
      \bottomrule
  \end{tabular}
  \label{tab:params_flops}
\end{table}

We adopt 1D convolution as the local operator that is more efficient and parameter friendly, and the improved transformer structure owns a constant number of sequentially executed operations and $O(1)$ maximum path length between any two positions.

\section{Analysis of the Local Ratio}\label{app:2}
\begin{theorem} \label{thm:local_param1}
  The numbers of global and local branches in the $i-th$ mixed attention layer ($MA^i$) is $d_1$ and $d_2$. For the $i-th$ input sequence $F^i \in\mathbb{R}^{l \times d}$, the parameter of $MA^i$ is minimum when $d_1 = d_2 = d/2$, where $l$ and $d$ are sequence length and dimension, respectively.
\end{theorem}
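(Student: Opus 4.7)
The plan is to express the parameter count of the mixed attention layer as a function of the single free variable $d_1$ (with $d_2 = d - d_1$) and then invoke a convexity argument on a symmetric quadratic.

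First, I would read off the per-branch parameter counts from Table~\ref{tab:params_flops} in Appendix~\ref{app:1}. Treating both the global and local branches uniformly as self-attention-style blocks (which is the natural symmetric model; the case of local MSA is one of the instantiations explicitly listed in Section~\ref{section:eat}), a branch of width $d_i$ contributes $4(d_i + 1)d_i$ parameters (four $Q,K,V,O$ projections with biases). Summing over the two branches,
$$
P(d_1, d_2) \;=\; 4(d_1^2 + d_2^2) + 4(d_1 + d_2) \;=\; 4(d_1^2 + d_2^2) + 4d,
$$
so under the linear constraint $d_1 + d_2 = d$ the linear term is a constant, and minimising $P$ reduces to minimising the symmetric quadratic $d_1^2 + d_2^2$.

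Next, I would apply the QM--AM inequality $d_1^2 + d_2^2 \geq \tfrac{1}{2}(d_1 + d_2)^2 = d^2/2$, with equality iff $d_1 = d_2$. Combined with the constraint, this pins the unique minimiser at $d_1 = d_2 = d/2$. As a cross-check I would also substitute $d_2 = d - d_1$, differentiate to find the critical point directly, and verify $P''(d_1) = 16 > 0$ to confirm convexity and hence that the critical point is a global minimum.

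The main obstacle I anticipate is the asymmetric case where the local branch is instantiated as a 1D convolution (the default operator used later in the paper) rather than as local self-attention. The 1D-convolution formula $(kd_2 + 1)d_2$ has leading quadratic coefficient $k$ rather than $4$, and straightforward differentiation of the resulting one-variable objective yields a critical point $d_1 = (2kd - 3)/(8 + 2k)$, which equals $d/2$ only when $k = 4 + 3/d$. To reconcile the proposition with this asymmetric instantiation, I would either (i) restrict the statement to the symmetric parameterisation described above, or (ii) interpret the claim as a leading-order statement in which both branches are assigned a common effective quadratic cost, which is presumably the convention the author is implicitly using.
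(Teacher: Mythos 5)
Your overall technique --- eliminate one variable via $d_1+d_2=d$ and minimize a one-variable quadratic --- is the same as the paper's, but your cost model for the local branch does not match the one the paper actually uses, and that is where the argument goes astray. The paper's proof takes the global branch as self-attention, $4(d_1+1)d_1$, and the local branch as a linear projection \emph{plus} a 1D convolution, $(d_2+1)d_2+(kd_2+1)d_2$ (both read off Table~\ref{tab:params_flops}), so the objective is $4d_1^2+(k+1)d_2^2+4d_1+2d_2$. With the default kernel size $k=3$ the quadratic coefficients of the two branches are both $4$; the vertex of the resulting quadratic $(5+k)d_2^2-(8d+2)d_2+4d^2+4d$ sits at $d_2=d/2+1/8$, and the integrality of $d_1,d_2$ rounds this to $d_1=d_2=d/2$. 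Your main argument instead assumes both branches are self-attention blocks, which proves a statement about a different, fully symmetric architecture rather than the $MA^i$ with a 1D-convolution local operator that the paper defaults to and that its proof addresses.

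You do correctly anticipate the asymmetric case as the real obstacle, but your model of the convolutional branch is only $(kd_2+1)d_2$ --- you drop the extra $(d_2+1)d_2$ linear term --- which is why your critical point $d_1=(2kd-3)/(8+2k)$ refuses to equal $d/2$ for any sensible $k$ and why you end up proposing to restrict or reinterpret the claim. Neither workaround is needed once the linear term is restored and $k=3$ is substituted: the exact minimizer is off from $d/2$ by only $1/8$, and the integer constraint absorbs that. Your instinct that the conclusion is $k$-dependent is nonetheless sound --- the paper's minimizer $d_2=(4d+1)/(5+k)$ lands near $d/2$ precisely because $k=3$ makes $k+1=4$ --- but the proposition is meant to be read under the paper's default configuration, in which the claimed optimum does follow.
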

\begin{proof}
  Given the $F^i \in\mathbb{R}^{l \times d}$ and $MA^i$ with $d_1$ and $d_2$, the overall parameters $Params^i = 4(d_1 + 1)d_1 + (d_2 + 1)d_2 + (kd_2 + 1)d_2$ according to Table~\ref{tab:params_flops} ($k$ is the kernel size), and it is factorized as follows:
  \begin{equation}
    \begin{aligned}
      Params^i  &= 4(d_1 + 1)d_1 + (d_2 + 1)d_2 + (kd_2 + 1)d_2 \\
                &= 4{d_1}^2 + (k+1){d_2}^2 + 4d_1 + 2d_2
     \end{aligned}
  \end{equation}
  Based on $d_1 + d_2 = d$, we have
  \begin{equation}
    \begin{aligned}
      Params^i  &= (5+k){d_2}^2 - (8d+2)d_2 + 4d^2 + 4d
      \label{eq:params}
    \end{aligned}
  \end{equation}
  Applying the minimum value formula of a quadratic function to the equation \ref{eq:params}, we can obtain the minimum value $2d^2 + 3d + 1/8$, where $d_1 = d/2 - 1/8$ and $d_2 = d/2 + 1/8$. Given that $d_1$ and $d_2$ are integers, we make $d_1 = d_2 = d/2$. Therefore, the minimum value of equation \ref{eq:params} becomes $2d^2 + 3d$ that is nearly half of the original self-attention layer, \ie, $4d^2 + 4d$, according to Table~\ref{tab:params_flops}.
\end{proof}
\begin{theorem} \label{thm:local_flops1}
  The numbers of global and local branches in the $i-th$ mixed attention layer ($MA^i$) is $d_1$ and $d_2$. For the $i-th$ input sequence $F^i \in\mathbb{R}^{l \times d}$, the FLOPs of $MA^i$ is minimum when $d_1 = d/2 + l/8, d_2 = d/2 - l/8$, where $l$ and $d$ are sequence length and dimension, respectively.
\end{theorem}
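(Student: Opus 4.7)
The plan is to mirror the proof of Proposition \ref{thm:local_param1}: express $\mathrm{FLOPs}^i$ as a convex quadratic in a single variable using the constraint $d_1 + d_2 = d$, then read off the minimizer from the vertex formula.

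First, using Table \ref{tab:params_flops}, I would sum the three FLOPs contributions inside the mixed attention layer: the global self-attention branch on width $d_1$ costs $8 d_1^2 l + 4 d_1 l^2 + 3 l^2$; the linear projection of the local branch on width $d_2$ costs $2 d_2^2 l$; and the 1D convolution of the local branch with kernel $k$ costs $2 k d_2^2 l$. Summing and substituting $d_1 = d - d_2$ collects to
$$\mathrm{FLOPs}^i(d_2) \;=\; (10+2k)\, l\, d_2^2 \;-\; 4 l (4d + l)\, d_2 \;+\; \big(8 l d^2 + 4 l^2 d + 3 l^2\big),$$
which is a convex quadratic in $d_2$ on $[0, d]$.

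Second, applying the vertex formula $d_2^{\star} = B/(2A)$ with $A = (10 + 2k)\, l$ and $B = 4 l (4d+l)$ gives $d_2^{\star} = (4d + l)/(5 + k)$. Specializing to the paper's default $k = 3$ (justified in the ablation) collapses this to $d_2^{\star} = (4d + l)/8 = d/2 + l/8$, and hence $d_1^{\star} = d/2 - l/8$, matching the claim after identifying the labels so the global branch absorbs the extra $l/8$. The minimum value $(8ld^2 + 4 l^2 d + 3 l^2) - B^2/(4A)$ follows by direct substitution, and as in Proposition \ref{thm:local_param1} the real-valued optimum is then rounded to the nearest admissible integer pair while preserving $d_1 + d_2 = d$.

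The main obstacle I expect is that the raw vertex formula depends on the kernel size $k$, whereas the stated answer $d/2 \pm l/8$ is $k$-free. I would therefore make the $k = 3$ specialization explicit at the outset, which both matches the kernel-size choice pinned down by the ablation and reproduces the clean denominator of $8$. A secondary subtlety is the integer-valuedness of $d_1, d_2$: the continuous optimum need not be integral, so I would note the rounding step in direct parallel with the proof of Proposition \ref{thm:local_param1}, where the analogous fractional correction $1/8$ was absorbed into the rounded pair.
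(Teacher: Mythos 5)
Your proposal is correct and takes essentially the same route as the paper: sum the branch FLOPs from Table~\ref{tab:params_flops}, substitute $d_1 = d - d_2$, and minimize the resulting quadratic $[(10+2k)d_2^2-(16d+4l)d_2+8d^2+4dl+3l]l$ at its vertex, followed by rounding to an integer split. You are in fact slightly more careful than the paper in two places: you make the implicit $k=3$ specialization explicit (the paper's $k$-free answer $d/2\pm l/8$ silently assumes it), and you correctly observe that the vertex assigns $d/2+l/8$ to $d_2$, which agrees with the paper's own proof body but is swapped relative to the labels in the proposition statement.
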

\begin{proof}
  Given the $F^i \in\mathbb{R}^{l \times d}$ and $MA^i$ with $d_1$ and $d_2$, the overall FLOPs $FLOPs^i = 8{d_1}^2l + 4d_1l^2 + 3l^2 + 2{d_2}^2l + 2{d_2}^2lk$ according to Table~\ref{tab:params_flops} ($k$ is the kernel size), and it is factorized as follows:
  \begin{equation}
    \begin{aligned}
      FLOPs^i &= 8{d_1}^2l + 4d_1l^2 + 3l^2 + 2{d_2}^2l + 2{d_2}^2lk \\
              &= (8{d_1}^2 + 4d_1l + 3l + 2{d_2}^2 + 2{d_2}^2k)l
     \end{aligned}
  \end{equation}
  Based on $d_1 + d_2 = d$, we have
  \begin{equation}
    \begin{aligned}
      FLOPs^i  &= [(10+2k){d_2}^2 - (16d + 4l)d_2 + 8d^2 + 4dl + 3l]l
      \label{eq:flops}
    \end{aligned}
  \end{equation}
  Applying the minimum value formula of a quadratic function to the equation \ref{eq:flops}, we can obtain the minimum value $(4d^2 + 2ld - l^2/4 + 3l)l$, where $d_1 = d/2 - l/8$ and $d_2 = d/2 + l/8$. Given that $d_1$ and $d_2$ are integers and $l$ is usually much smaller than $d$ in practice, we make $d_1 = d_2 = d/2$ that is consistent with the above proposition \ref{thm:local_param}. Therefore, the minimum value of equation \ref{eq:flops} becomes $(4d^2 + 2ld + 3l)l$ that is nearly half of the original self-attention layer according to Table~\ref{tab:params_flops}, \ie, $(8d^2 + 4ld + 3l)l$.
\end{proof}

\section{Visualization of Hilbert SFC}\label{app:3}
\begin{figure*}[htp]
  \centering
  \includegraphics[width=1.0\linewidth]{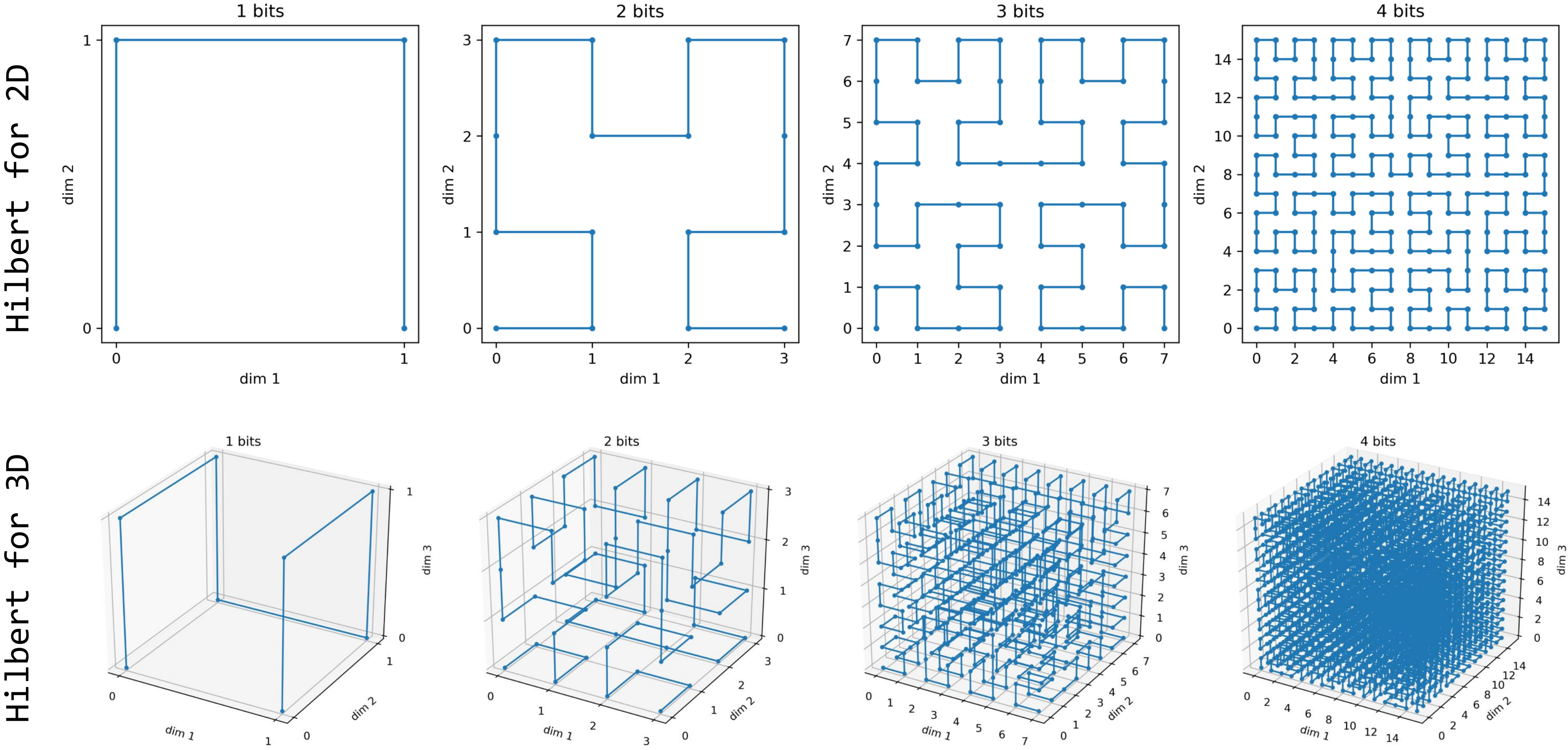}
  \caption{Visualization of Hilbert SFC in 2D and 3D format under different image bits.}
  \label{fig:curve_show_hilbert}
  \vspace{-1.0em}
\end{figure*}

We visualize the SFC of Hilbert for a more intuitive understanding in Figure~\ref{fig:curve_show_hilbert}.

\vspace{-0.3em}
\section{Variants of EAT}\label{app:4}

\begin{table*}[htp]
  \vspace{-1.0em}
  \caption{Detailed settings of our EAT variants.}
  \label{tab:variant}
  \renewcommand{\arraystretch}{1.0}
  \centering
  \scalebox{0.86}
  {\small
  \begin{tabular}{C{28pt}|C{22pt}|C{22pt}|C{22pt}|C{22pt}|C{22pt}|C{32pt}|C{22pt}|C{22pt}|C{22pt}|C{22pt}|C{26pt}}

  \hline
  \multirow{2}{*}{\makecell[c]{Model}} & \multirow{2}{*}{\makecell[c]{Emb. \\Dim.}} & \multirow{2}{*}{\makecell[c]{H. in \\MSA}} & \multirow{2}{*}{\makecell[c]{Layers}} & \multirow{2}{*}{\makecell[c]{Head \\Layers}} & \multirow{2}{*}{\makecell[c]{FFN \\Ratio}} & \multirow{2}{*}{\makecell[c]{Local \\Ope.}} & \multirow{2}{*}{\makecell[c]{Local \\Ratio}} & \multirow{2}{*}{\makecell[c]{Kernel \\Size}} & \multirow{2}{*}{\makecell[c]{SFC \\Mode}} & \multirow{2}{*}{\makecell[c]{Image \\Size}} & \multirow{2}{*}{\makecell[c]{Params.}} \\
  & & & & & & & & & & & \\
  \hline
  EAT-Ti  & 192 & 2 & 12 & 2 & 4 & 1D Conv & 3 & 0.5 & SIS & $224^{2}$ & \pzo5.7M \\
  EAT-S   & 384 & 3 & 12 & 2 & 4 & 1D Conv & 3 & 0.5 & SIS & $224^{2}$ & 22.1M \\
  EAT-M   & 576 & 4 & 12 & 2 & 4 & 1D Conv & 3 & 0.5 & SIS & $224^{2}$ & 49.0M \\
  EAT-B   & 768 & 6 & 12 & 2 & 4 & 1D Conv & 3 & 0.5 & SIS & $224^{2}$ & 86.6M \\
  \hline
  \end{tabular}}
  \vspace{-0.5em}
\end{table*}

Table~\ref{tab:variant} shows detailed settings of our proposed four EAT variants. 

\section{Visualization of Attention Map in Task-Related Head}\label{app:6}

\begin{figure*}[htp]
  \centering
  \includegraphics[width=1.0\linewidth]{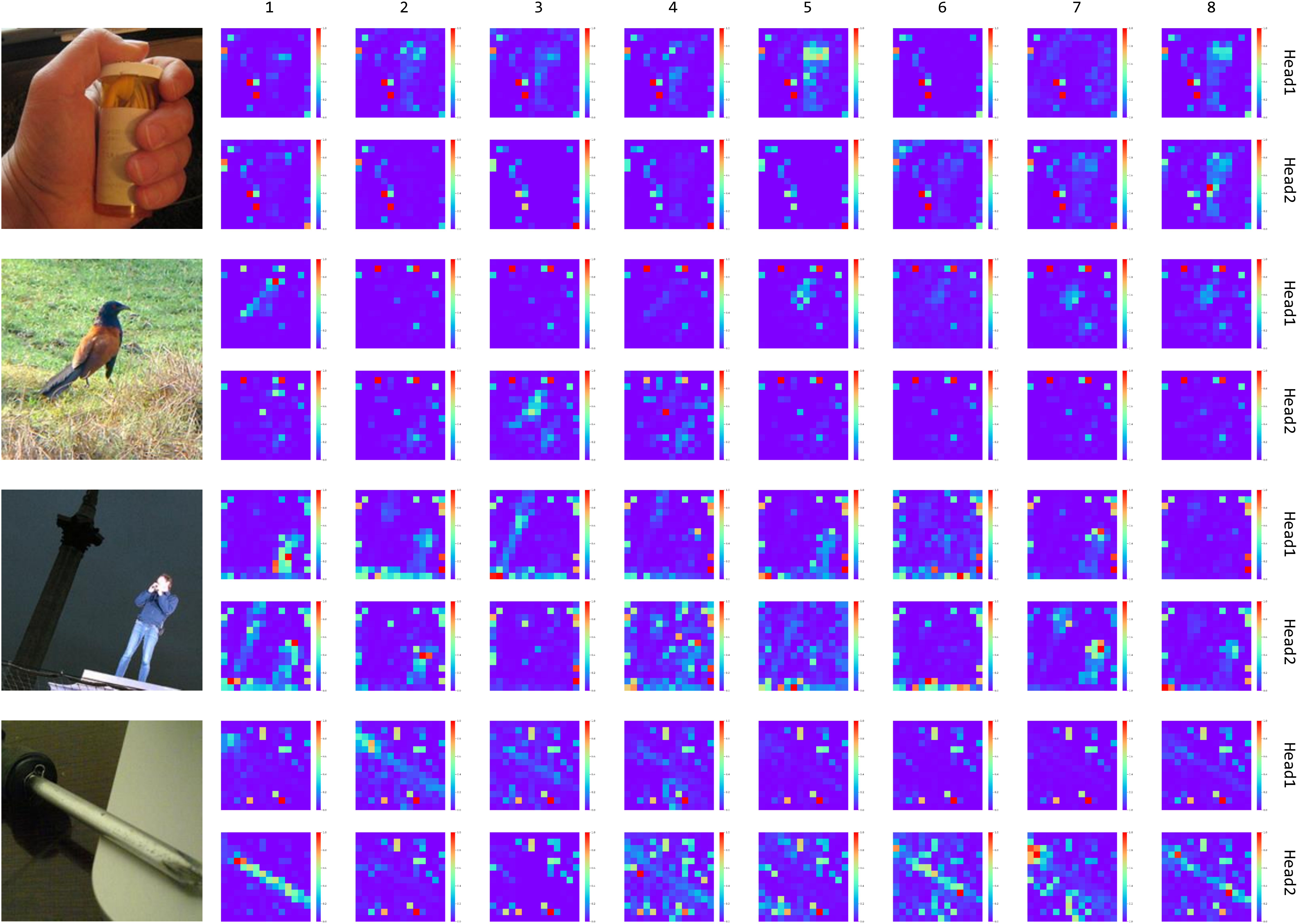}
  \caption{Visualization of Attention Maps in the classification head. We display two head layers for each image and eight attention maps for eight heads in each head layer.}
  \label{fig:exp_vis_head}
\end{figure*}

\begin{figure*}[htp]
  \centering
  \includegraphics[width=1.0\linewidth]{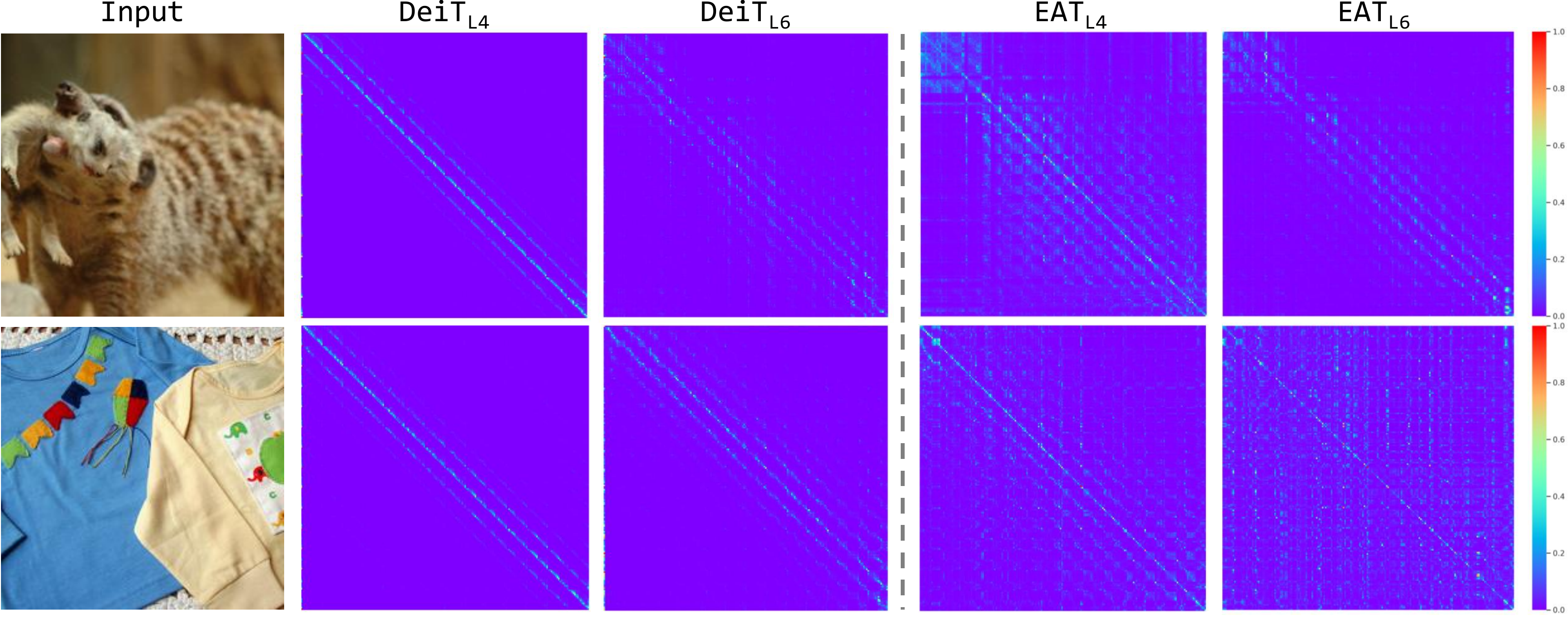}
  \caption{Visualization of attention maps for the fourth and sixth middle layers. The first column shows the input images; the second and third columns are visualized attention maps for DeiT, while the fourth and fifth columns for our EAT-S.}
  \label{fig:exp_vis_layer}
\end{figure*}

\begin{figure*}[htp]
  \centering
  \includegraphics[width=1.0\linewidth]{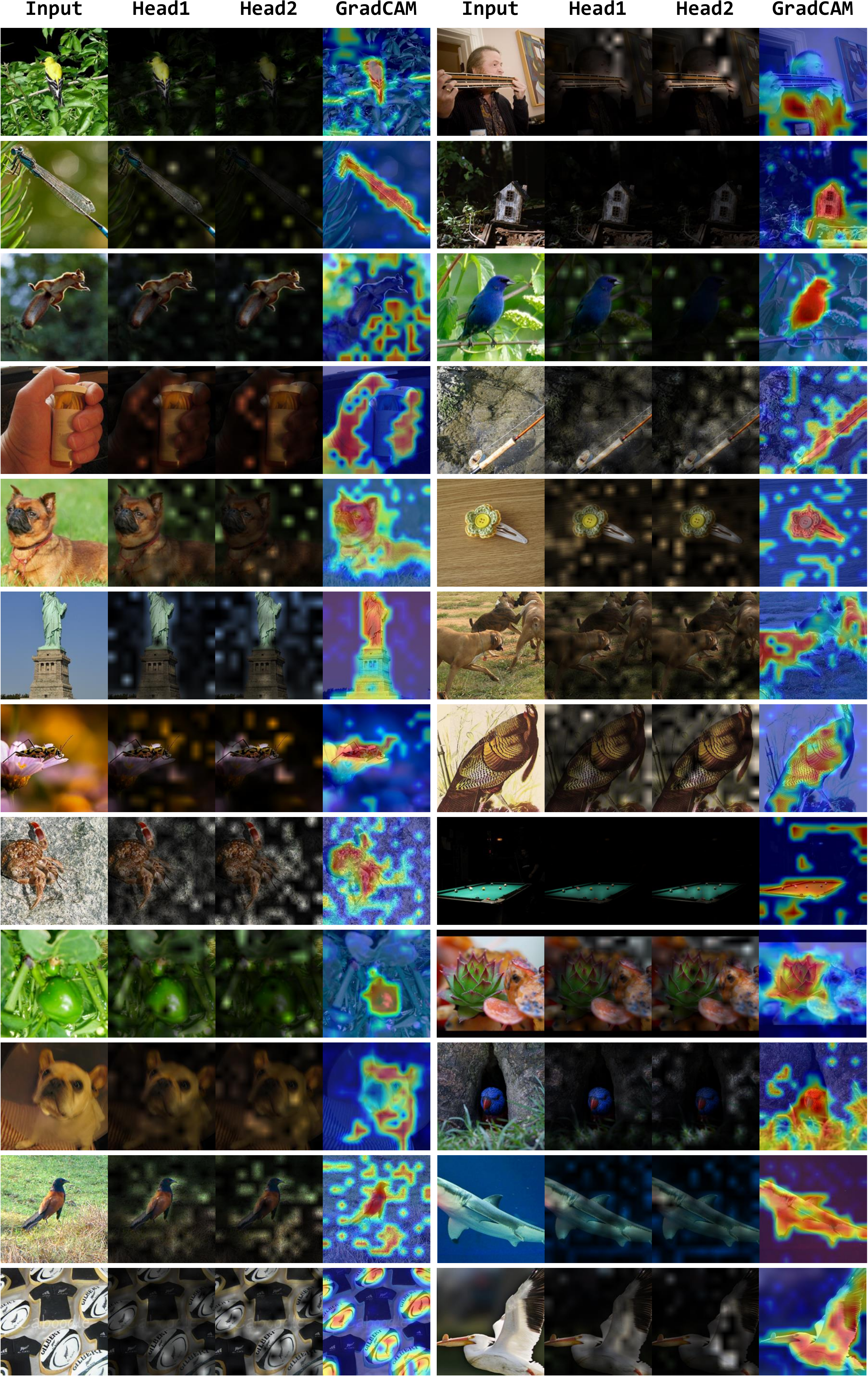}
  \caption{Visualization of overall attention for different head layers and Grad-CAM results.}
  \label{fig:exp_vis_head_avg}
\end{figure*}

Taking the classification Head as an example, we visualize the attention map in the Head to intuitively explain why the model works. Specifically, we choose EAT-S here, which contains two layers for the classification Head, and each Head contains eight heads in the inner cross-attention layer. Here, capital Head indicates task-related Head, while lowercase head represents multi-head in the cross-attention. As shown in Figure~\ref{fig:exp_vis_head}, we normalize values of attention maps to [0, 1] and draw them on the right side of the image. Results show that different heads focus on different regions in the image, and the deeper Head2 integrates features of Head1 to form the final vector for classification that focuses on a broader area. \\
Furthermore, we average eight attention maps of each head layer and use it as the global attention map that represents which parts of the image the corresponding head layer is focusing on, as shown in Figure~\ref{fig:exp_vis_head_avg}. Also, Grad-CAM~\cite{gradcam} is applied to produce a coarse localization map highlighting the crucial regions in the image. By analyzing results, we find that both visualization methods focus more on the subjects in the image, demonstrating the effectiveness of our proposed Head module.

\section{Visualization of Attention Map in Middle Layers}\label{app:7}

We also visualize some attention maps for middle layers, taking the heads in the fourth and sixth layers as an example. As shown in Figure~\ref{fig:exp_vis_layer}, compared with DeiT without local modeling, our EAT pays more attention to global information fusion, where more significant values are found at off-diagonal locations. 
We analyze the reason for this phenomenon that the parallel local path takes responsibility for some of the local modelings that would have been the responsibility of the MSA.

\end{document}